\newcommand*{\ldblbrace}{\{\mskip-5mu\{}
\newcommand*{\rdblbrace}{\}\mskip-5mu\}}
\newcommand*{\bigldblbrace}{\bigg\{\mskip-10mu\bigg\{}
\newcommand*{\bigrdblbrace}{\bigg\}\mskip-10mu\bigg\}}
\newtheorem{lemma}{Lemma}
\newtheorem{theorem}{Theorem}
\newtheorem{remark}{Remark}
\newtheorem{definition}{Definition}
\def\finalSubmission{1}
\def\includeAcknowledgements{1}
\def\useNotes{1}
  \def\useNotes{0}
    \newcommandx{\VGn}[2][1=]{\todo[linecolor=blue,backgroundcolor=blue!25,bordercolor=blue,#1]{#2}}
    \newcommandx{\NAn}[2][1=]{\todo[linecolor=red,backgroundcolor=red!25,bordercolor=red,#1]{#2}}
    \newcommandx{\AKn}[2][1=]{\todo[linecolor=green,backgroundcolor=green!25,bordercolor=green,#1]{#2}}
    \newcommandx{\DNn}[2][1=]{\todo[linecolor=gray,backgroundcolor=gray!25,bordercolor=gray,#1]{OK: #2}}
    \newcommandx{\VGnOk}[2][1=]{\todo[linecolor=blue,backgroundcolor=blue!10,bordercolor=blue,#1]{#2}}
    \newcommandx{\NAnOk}[2][1=]{\todo[linecolor=red,backgroundcolor=red!10,bordercolor=red,#1]{#2}}
    \newcommandx{\AKnOk}[2][1=]{\todo[linecolor=green,backgroundcolor=green!10,bordercolor=green,#1]{#2}}
    \newcommandx{\VGn}[2][1=]{}
    \newcommandx{\NAn}[2][1=]{}
    \newcommandx{\AKn}[2][1=]{}
    \newcommandx{\DNn}[2][1=]{}
    \newcommandx{\VGnOk}[2][1=]{}
    \newcommandx{\NAnOk}[2][1=]{}
    \newcommandx{\AKnOk}[2][1=]{}
\algrenewcommand\algorithmicrequire{\textbf{Input:}}
\algrenewcommand\algorithmicensure{\textbf{Output:}}
\title[Beyond 1-WL with Local Ego-Network Encodings]{Beyond 1-WL with Local Ego-Network Encodings}
\author[N. Alvarez-Gonzalez et al.]{%
Nurudin Alvarez-Gonzalez\thanks{Corresponding Author.}\\
\institute{Universitat Pompeu Fabra}\\
\email{nuralgon@gmail.com}\And
Andreas Kaltenbrunner\\
\institute{ISI Foundation}\\
\institute{Universitat Pompeu Fabra}\\
\email{kaltenbrunner@gmail.com}\And
Vicen\c{c} G\'omez\\
\institute{Universitat Pompeu Fabra}\\
\email{vicen.gomez@upf.edu}
}
\begin{document}

\maketitle

\begin{abstract}
Identifying similar network structures is key to capture graph isomorphisms and learn representations that exploit structural information encoded in graph data. This work shows that ego-networks can produce a structural encoding scheme for arbitrary graphs with greater expressivity than the Weisfeiler-Lehman (1-WL) test. We introduce $\textsc{Igel}$, a preprocessing step to produce features that augment node representations by encoding ego-networks into sparse vectors that enrich Message Passing (MP) Graph Neural Networks (GNNs) beyond 1-WL expressivity. We describe formally the relation between $\textsc{Igel}$ and 1-WL, and characterize its expressive power and limitations. Experiments show that $\textsc{Igel}$ matches the empirical expressivity of state-of-the-art methods on isomorphism detection while improving performance on seven GNN architectures.
\end{abstract}

\section{Introduction}
Novel approaches for representation learning on graph structured data have appeared in recent years~\citep{Bronstein2021GeometricDL}.
Graph neural networks can efficiently learn representations that depend both on the graph structure and node and edge features from large-scale graph datasets.
The most popular choice of architecture is the Message Passing Graph Neural Network (MP-GNN). In MP-GNNs, a node is represented by iteratively aggregating local feature `messages' from its neighbors.
Despite being succesfully applied in a wide variety of domains~\citep{Ying-2018-PinSAGE,DuvenaudNIPS2015,gilmer2017neural,JMLR:v21:19-671,battaglia_NIPS2016}, there is a limit on the representational power of MP-GNNs provided by the computationally efficient Weisfeiler-Lehman (1-WL) test for checking graph isomorphism~\citep{xu2018how,morris2019}.
Establishing this connection has lead to a better theoretical understanding of the performance of MP-GNNs and many possible generalizations 
\citep{grohe_2017,Brijder2019,Barcelo2020,geerts2021,Morris21a}.

To improve the expressivity of MP-GNNs, recent methods have extended the vanilla message-passing mechanism is various ways. For example,
using higher order $k$-vertex tuples~\citep{morris2019} leading to $k$-WL generalizations, introducing relative positioning information for network vertices~\citep{pmlr-v97-you19b}, propagating messages beyond direct neighborhoods~\citep{nikolentzos2020khop}, using concepts from algebraic topology~\citep{bodnar2021}, or combining sub-graph information in different ways~\citep{HaggaiNIPS2019PPGN,bouritsas2021,zhang2021,balcilar2021breaking,you2021identity,sandfelder2021ego,zhao2022,bevilacqua2022equivariant,DEA_GNNs}. 
All aforementioned approaches (which we review in more detail in~\autoref{sec:relwork}) improve expressivity by extending MP-GNNs architectures, often evaluating on standarized benchmarks~\citep{hu2020ogb,Morris2020TUDatasets,you2020design,dwivedi2020benchmarkgnns}.
However, identifying the optimal approach on novel domains remains unclear and requires costly architecture search.


In this work, we show that incorporating simple ego-network encodings already boosts the expressive power of MP-GNNs beyond the 1-WL test, while keeping the benefits of efficiency and simplicity.
We present \textbf{\textsc{Igel}}, an \textbf{I}nductive \textbf{G}raph \textbf{E}ncoding of \textbf{L}ocal information, which in its basic form extends node attributes with histograms of node degrees at different distances.
The \textbf{\textsc{Igel}} encodings can be computed as a pre-processing step irrespective of model architecture. 
Theoretically, we formally prove that the \textbf{\textsc{Igel}} encoding is no less expressive than the 1-WL test, and provide examples that show that it is more expressive than 1-WL. We also identify expressivity upper-bounds for graphs that are indistinguishable using state of the art methods.
Experimentally, we asses the performance of seven model architectures enriched with \textbf{\textsc{Igel}} encodings on five tasks and ten graph data sets, and find that it consistently improves downstream model performance.


\section{\texorpdfstring{$\textsc{Igel}$}{IGEL}: Ego-Networks As Sparse Inductive Representations}\label{sec:IGELAlgorithm}

Given a graph $G = (V, E)$, we define $n =  \vert V \vert $ and $m =  \vert E \vert $, $d_G(v)$ is the degree of a node $v$ in $G$ and $d_{\texttt{max}}$ is the maximum degree. For $u, v \in V$, $l_G(u, v)$ is their shortest distance, and $\texttt{diam}(G) = \max(l_G(u, v) \vert u, v \in V)$ is the diameter of $G$. Double brackets $\ldblbrace\cdot\rdblbrace$ denote a lexicographically-ordered     multi-set, $\mathcal{E}^{\alpha}_v \subseteq G$ is the $\alpha$-depth ego-network centered on $v$, and $\mathcal{N}_G^{\alpha}(v)$ is the set of neighbors of $v$ in $G$ up to distance $\alpha$, i.e., $\mathcal{N}_G^{\alpha}(v) = \{u\ |\ u\ \in V\  \land \ l_G(u, v) \leq \alpha\}$.

\autoref{alg:ColorRefinement1WL} shows the 1-WL test, where $\texttt{hash}$ maps a multi-set to an equivalence class shared by all nodes with matching multi-set encodings after a 1-WL iteration. The output of 1-WL is $\mathbb{N}^n$---mapping each node to a color, bounded by $n$ distinct colors if each node is uniquely colored. $k$-higher order variants of the WL test (denoted $k$-WL) operate on $k$-tuples of vertices, such that colors are assigned to $k$-vertex tuples. If two graphs $G_1$, $G_2$ are not distinguishable by the $k$-WL test (that is, their coloring histograms match), they are $k$-WL equivalent---denoted $G_1 \equiv_{k-\text{WL}} G_2$. Due to the hashing step, 1-WL does not preserve distance information in the encoding, and minor changes in the structure of the network (removing one edge) may dramatically change node-level representations. $\textsc{Igel}$ addresses both limitations, improving expressivity in the process.

\subsection{The \texorpdfstring{$\textsc{Igel}$}{IGEL} Algorithm}

Intuitively, $\textsc{Igel}$ encodes a vertex $v$ with the multi-set of ordered degree sequences at each distance within $\mathcal{E}^{\alpha}_v$. As such, $\textsc{Igel}$ is a variant of the 1-WL algorithm shown in~\autoref{alg:ColorRefinement1WL}, executed for $\alpha$ steps with two modifications. First, the hashing step is removed and replaced by computing the union of multi-sets across steps ($\cup$); second, the iteration number is explicitly introduced in the representation---with the output multi-set $e^{\alpha}_v$ shown in~\autoref{alg:IGELEncoding}.

\begin{wrapfigure}{r}{0.49\linewidth}
    \vspace{-.5cm}
    \centering
    \includegraphics[width=0.9\linewidth]{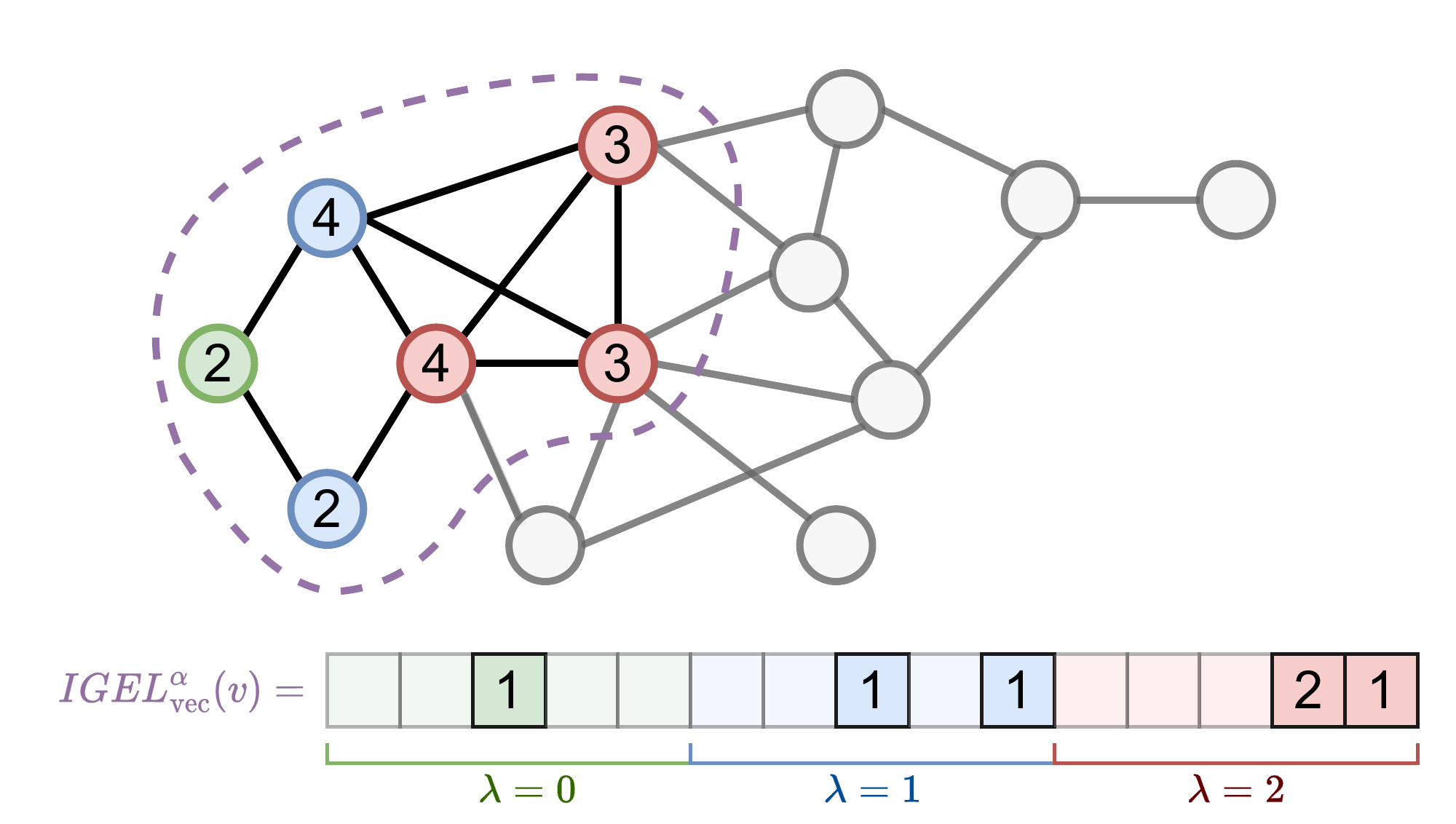}
    \caption{$\textsc{Igel}$ encoding of the \textcolor{green!50!black}{green vertex}. Dashed region denotes $\mathcal{E}^{\alpha}_v (\alpha = 2)$. The \textcolor{green!50!black}{green} vertex is at distance \textcolor{green!50!black}{$0$}, \textcolor{blue!75!black}{blue vertices} at \textcolor{blue!75!black}{$1$} and \textcolor{red!80!black}{red vertices} at \textcolor{red!80!black}{$2$}. Labels show degrees in $\mathcal{E}^{\alpha}_v$. The frequency of $(\lambda, \delta)$ tuples forming $\textsc{Igel}^{\alpha}_{\texttt{vec}}(v)$ is: {\tiny$\{
        \textcolor{green!50!black}{(0, 2): 1},
        \textcolor{blue!75!black}{(1, 2): 1},
        \textcolor{blue!75!black}{(1, 4): 1},
        \textcolor{red!80!black}{(2, 3): 2},
        \textcolor{red!80!black}{(2, 4): 1} 
    \}$}.
    }    
    \label{fig:IGELGraphEncoding}
\end{wrapfigure}
To be used as vertex features, the multi-set can be represented as a sparse vector $\textsc{Igel}_{\texttt{vec}}^{\alpha}(v)$, where the frequency of a pair of distance $\lambda$ and degree $\delta$ is contained on index $i = \lambda\cdot(d_{\texttt{max}} + 1) + \delta$. 
Degrees greater than $d_{\texttt{max}}$ are capped to $d_{\texttt{max}}$, with the resulting vector shown in~\autoref{fig:IGELGraphEncoding}:
\begin{align*}
    \textsc{Igel}_{\texttt{vec}}^{\alpha}(v)_i =&   \Big \vert \ldblbrace(\lambda, \delta) \in e^{\alpha}_v\rdblbrace\Big \vert,\\
     \text{for } & \lambda\cdot(d_{\texttt{max}} + 1) + \delta =i.
\end{align*}
$G_1 = (V_1, E_1)$ and $G_2 = (V_1, E_1)$ are $\textsc{Igel}$-equivalent for $\alpha$ if the sorted multi-set containing node representations is the same for $G_1$ and $G_2$:
\begin{gather*}
    G_1 \equiv_{\textsc{Igel}}^{\alpha} G_2 \iff \\
    \ldblbrace e^{\alpha}_{v_1} : \forall v_1 \in V_1 \rdblbrace = \ldblbrace e^{\alpha}_{v_2} : \forall v_2 \in V_2 \rdblbrace.
\end{gather*}

\begin{minipage}[t]{0.49\textwidth}
\begin{algorithm}[H]
    \centering
    \caption{1-WL (Color refinement).}\label{alg:ColorRefinement1WL}
    \begin{algorithmic}[1]
      \Require{$G = (V, E)$}
      \State{$c_v^0 := \texttt{hash}(\ldblbrace d_G(v)\rdblbrace)\ \forall\ v \in V$}
      \Do
        \State{$c_v^{i+1} := \texttt{hash}(\ldblbrace c_u^i:\ \underset{u \neq v}{\forall}\ u \in \mathcal{N}_G^1(v)\rdblbrace) $}
      \DoWhile{$c_v^i \neq c_v^{i-1}$}
      \Ensure{$c_v^i : V \rightarrow \mathbb{N}$}
    \end{algorithmic}
\end{algorithm}
\end{minipage}
\hfill
\begin{minipage}[t]{0.49\textwidth}
\begin{algorithm}[H]
    \centering
    \caption{$\textsc{Igel}$ Encoding.}\label{alg:IGELEncoding}
    \begin{algorithmic}[1]
      \Require{$G = (V, E), \alpha: \mathbb{N}$}
      \State{$e_v^0 := \ldblbrace(0, d_G(v))\rdblbrace\ \forall\ v \in V$}
      \For{$i := 1$; $i \mathrel{+}= 1$ \textbf{until} $i = \alpha$}
        \State{$e_v^i := \bigcup(e_v^{i-1},$}
        \State{$\qquad \ldblbrace(i, d_{\mathcal{E}_G^{\alpha}(v)}(u))$}
        \State{$\qquad~~~\forall u \in \mathcal{N}^{\alpha}_G(v)\  \vert  \ l_G(u, v) = i \rdblbrace)$}
    \EndFor
      \Ensure{$e_v^{\alpha} : V \rightarrow \ldblbrace (\mathbb{N}, \mathbb{N}) \rdblbrace$}
    \end{algorithmic}
\end{algorithm}
\end{minipage}

\textbf{Space complexity.} $\textsc{Igel}$'s worst case space complexity is  $\mathcal{O}(\alpha \cdot n \cdot d_{\texttt{max}})$, conservatively assuming that every node will require $d_{\texttt{max}}$ parameters at every $\alpha$ depth from the center of the ego-network.

\textbf{Time complexity.} For $\textsc{Igel}$, each vertex has $d_{\texttt{max}}$ neighbors where the $\alpha$ iterations imply traversing through geometrically larger ego-networks with $(d_{\texttt{max}})^{\alpha}$ vertices, upper bounded by $m$. Thus $\textsc{Igel}$'s time complexity follows $\mathcal{O}(n \cdot \min(m, (d_{\texttt{max}})^{\alpha}))$, with $\mathcal{O}(n \cdot m)$ when $\alpha \geq \texttt{diam}(G)$, when implemented as BFS, for which we provide further details in~\autoref{app:IGELBFS}.

\section{Theoretical and Experimental Findings}
First, we analyze $\textsc{Igel}$'s expressive power with respect to 1-WL and recent improvements. Second, we measure the impact of $\textsc{Igel}$ as an additional input to enrich existing MP-GNN architectures.

\subsection{Expressivity: Which Graphs are \texorpdfstring{$\textsc{Igel}$}{IGEL}-Distinguishable?}\label{sec:Expressivity}

In this section, we discuss the increased expressivity of $\textsc{Igel}$ with respect to 1-WL, and identify expressivity upper-bounds for graphs that are indistinguishable under \textsc{Matlang} and the 3-WL test. 

\begin{wrapfigure}{r}{0.49\linewidth}
    \vspace{-.6cm}
    \centering
    \includegraphics[width=1.0\linewidth]{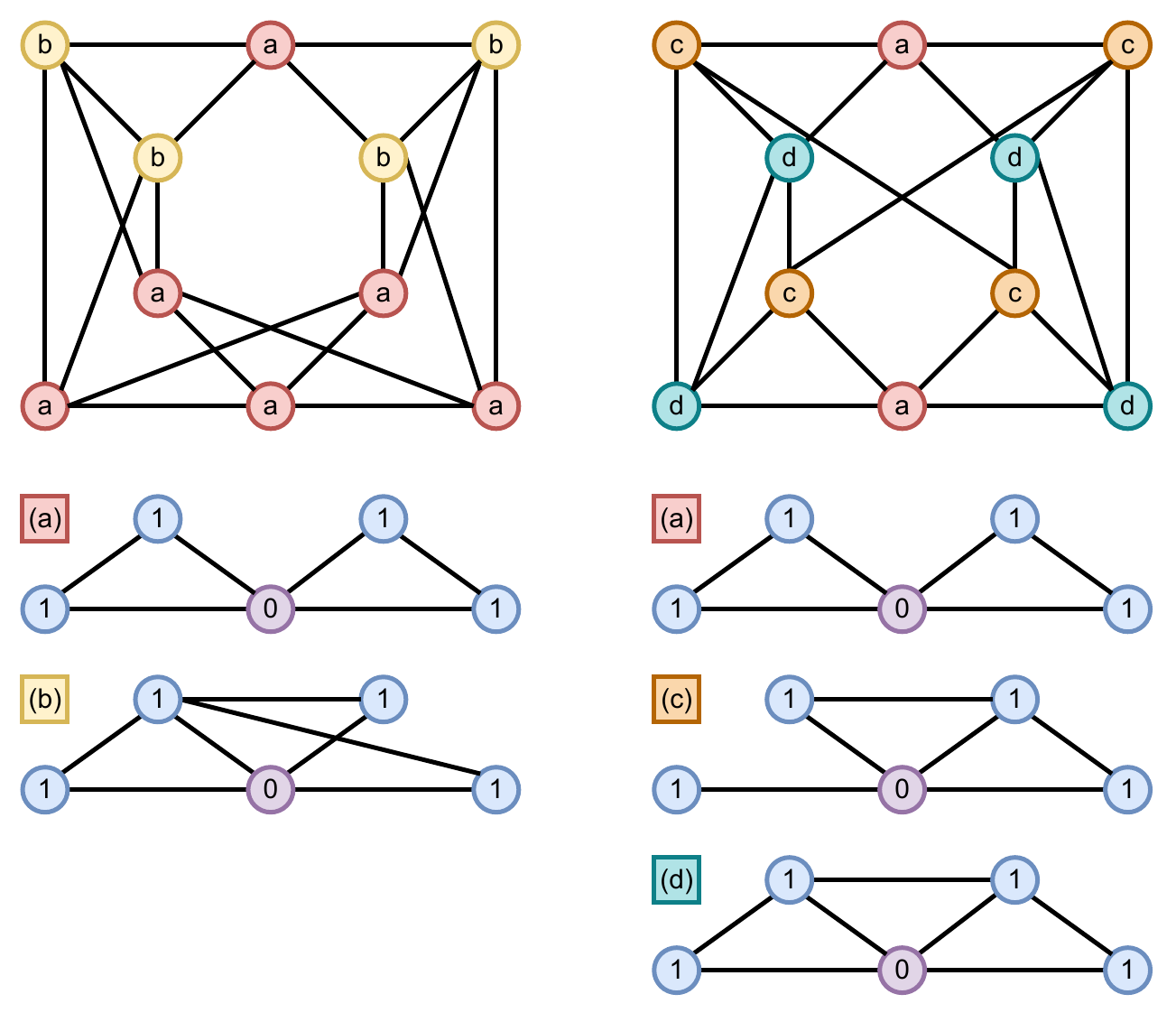}
    \caption{$\textsc{Igel}$ encodings for two Cospectral 4-regular graphs from~\citep{vandam03}. $\textsc{Igel}$ distinguishes 4 kinds of structures within the graphs (associated with every node as \textcolor{red!80!black}{a}, \textcolor{yellow!80!black}{b}, \textcolor{orange!70!black}{c}, and \textcolor{teal!80!black}{d}). The two graphs can be distinguished since the encoded structures and their frequencies do not match.}
    \label{fig:ML2IGELAlpha1}
    \vspace{-30pt}
\end{wrapfigure}

\textbf{--- Relationship to 1-WL.} $\textsc{Igel}$ is more powerful than 1-WL following \autoref{lemma:IGEL1WLPower} (as described and formally shown in~\autoref{app:IGEL1WLExpressivity}) and~\autoref{lemma:IGEL1WLMoreExpressive} (as shown below):

\begin{lemma}\label{lemma:IGEL1WLPower}
    $\textsc{Igel}$ is at least as expressive as 1-WL: $G_1 \not\equiv_{\text{1-WL}} G_2 \Rightarrow G_1 \not\equiv_{\textsc{Igel}}^{\alpha} G_2$ and $G_1 \equiv_{\textsc{Igel}}^{\alpha} G_2 \Rightarrow G_1 \equiv_{\text{1-WL}} G_2$. 
\end{lemma}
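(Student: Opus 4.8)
The plan is to prove the two implications at once, noting that they are contrapositives of one another, so it suffices to establish $G_1 \equiv_{\textsc{Igel}}^{\alpha} G_2 \Rightarrow G_1 \equiv_{\text{1-WL}} G_2$. I would derive this from a stronger \emph{node-level} statement: that the partition of vertices induced by the encoding $e_v^{\alpha}$ refines the partition induced by the stable 1-WL coloring. Concretely, I would exhibit a single map $\Phi$, defined independently of the input graph, with $c_v = \Phi(e_v^{\alpha})$ for every vertex $v$, where $c_v$ is the stable color produced by \autoref{alg:ColorRefinement1WL}. Given such a $\Phi$, equality of the sorted multi-sets $\ldblbrace e_v^{\alpha}\rdblbrace$ across $G_1$ and $G_2$ pushes forward under $\Phi$ to equality of the color histograms, which is exactly $G_1 \equiv_{\text{1-WL}} G_2$; to make the abstract hash classes comparable across the two graphs I would run 1-WL on the disjoint union $G_1 \sqcup G_2$.

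I would build $\Phi$ by induction on the 1-WL round index $t$, showing that $e_v^{\alpha}$ determines $c_v^t$. The base case is immediate, since $c_v^0 = \texttt{hash}(\ldblbrace d_G(v)\rdblbrace)$ is a function of $d_G(v)$, which $\textsc{Igel}$ records exactly and without truncation in its distance-$0$ entry $(0, d_G(v))$. For the inductive step I need the multi-set of neighbour colours $\ldblbrace c_u^t : u \in \mathcal{N}_G^1(v)\rdblbrace$, and here the key structural observation is that for any vertex $u$ at distance $i < \alpha$ from $v$, every neighbour of $u$ lies at distance at most $i+1 \le \alpha$, hence inside $\mathcal{E}^{\alpha}_v$; consequently the recorded ego-degree coincides with the true degree, $d_{\mathcal{E}^{\alpha}_G(v)}(u) = d_G(u)$. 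This lets the distance-labelled, union-over-steps encoding reconstruct the rooted neighbourhood ball around $v$, and therefore the 1-WL unrolling that fixes $c_v^t$ for the rounds reachable within that radius.

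The main obstacle, and the place where I expect the real care, is the \emph{boundary} of the ego-network: for vertices at distance exactly $\alpha$, neighbours at distance $\alpha+1$ are excluded, so $d_{\mathcal{E}^{\alpha}_G(v)}(u)$ is only a truncation of $d_G(u)$, and precisely the information 1-WL would consult there is discarded. The induction therefore goes through cleanly only once $\alpha$ is large enough that the truncated shell ceases to matter, the natural sufficient regime being $\alpha \ge \texttt{diam}(G)$, where each $\mathcal{E}^{\alpha}_v$ is the full connected component of $v$, every ego-degree equals the global degree, and $\textsc{Igel}$ records the complete distance-to-degree profile from $v$, driving the induction to 1-WL stabilisation. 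I would treat disconnected graphs component-by-component, and I would scrutinise the small-$\alpha$ case most closely: when the boundary shell constitutes a large fraction of the ego-network the refinement can degrade, so confirming that the stated bound survives for every $\alpha \ge 1$ rather than only past this threshold is the step I would verify before declaring the argument complete.
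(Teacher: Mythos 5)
Your overall strategy is the same one the paper's appendix uses: reduce to a single implication by contraposition, and control 1-WL through distance-tagged degree information, exploiting the fact that a vertex $u$ at distance $i<\alpha$ from $v$ has all of its neighbours inside $\mathcal{E}^{\alpha}_v$, so its recorded ego-degree equals $d_G(u)$ (the paper's hash-free ``1-WL\textsuperscript{*}'' with iteration labels plays exactly the role of your distance labels, and it likewise takes $\alpha$ one larger than the number of refinement rounds). Your closing worry about small $\alpha$ is also well founded and should not be waved away: the unquantified statement fails at $\alpha=1$. For instance, $P_6$ and the disjoint union of an edge with a $4$-cycle have the same degree sequence and are both triangle-free, so every $\mathcal{E}^{1}_v$ is a star and every vertex is encoded as $\ldblbrace(0,d_G(v))\rdblbrace\cup\ldblbrace(1,1)\rdblbrace^{d_G(v)}$, making the two graphs $\textsc{Igel}$-equivalent at $\alpha=1$; yet one round of 1-WL separates them. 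This is precisely why the paper's appendix restates the lemma with $\alpha=k+1$, where $k$ is the number of 1-WL iterations needed.

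The genuine gap is the inductive step that builds $\Phi$. You claim the distance-labelled, union-over-steps encoding ``reconstructs the rooted neighbourhood ball around $v$.'' It does not: $e_v^{\alpha}$ is a \emph{flat} multi-set of $(\lambda,\delta)$ pairs, so it records how many vertices of each degree lie on each sphere around $v$, but says nothing about which sphere-$(i{+}1)$ vertices attach to which sphere-$i$ vertices, nor about edges inside a sphere. The induction already breaks at $t=2$: computing $c_v^2$ requires $\ldblbrace c_u^1 : u\in\mathcal{N}_G^1(v)\rdblbrace$, i.e.\ the degree multi-set seen by each \emph{individual} neighbour $u$, and how the distance-$2$ degree counts are partitioned among the neighbours of $v$ is exactly the information the flat histogram discards. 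So the node-level refinement $c_v=\Phi(e_v^{\alpha})$ --- which is strictly stronger than the graph-level lemma --- is not established by this argument, and I would not expect it to hold even for $\alpha\geq\texttt{diam}(G)$, where $e_v^{\alpha}$ degenerates to the distance--degree profile of $v$, a far coarser invariant than the depth-$t$ unrolling trees that characterize 1-WL colours. The paper sidesteps a node-level map by arguing at the graph level against its flattened 1-WL\textsuperscript{*}, but be aware that its step from ``the multi-sets of $e_v^{\alpha}$ agree'' to ``the round-$k$ colourings agree'' is doing the same work your $\Phi$ would; if you pursue this proof, that flattening step is the one that needs an actual argument rather than an appeal to reconstruction.
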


\begin{lemma}\label{lemma:IGEL1WLMoreExpressive}
    There exist graphs that $\textsc{Igel}$ can distinguish but that 1-WL cannot distinguish. 
\end{lemma}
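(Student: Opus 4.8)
The plan is to prove this existence statement by exhibiting an explicit witness pair $(G_1, G_2)$ that is 1-WL-equivalent yet $\textsc{Igel}$-distinguishable. The natural candidate is the canonical pair that foils message passing: the $6$-cycle $G_1 = C_6$ and the disjoint union of two triangles $G_2 = C_3 \cup C_3$. Both are $2$-regular graphs on six vertices, which is precisely the regime in which $1$-WL is blind, so the whole argument reduces to (i) confirming 1-WL cannot separate them and (ii) showing a single $\textsc{Igel}$ iteration already does.

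First I would verify $G_1 \equiv_{\text{1-WL}} G_2$ by running color refinement (\autoref{alg:ColorRefinement1WL}). Since every vertex of both graphs has degree $2$, the initialization $c_v^0 = \texttt{hash}(\ldblbrace d_G(v)\rdblbrace)$ assigns one common color to all vertices; and since each vertex's neighbor multi-set then consists of two vertices of that same color, the refinement stabilizes at the first iteration, leaving a single color class in each graph. The resulting coloring histograms coincide, so $G_1 \equiv_{\text{1-WL}} G_2$.

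Next I would compute the $\textsc{Igel}$ encodings, the crucial point being that $\textsc{Igel}$ records degrees \emph{inside} the ego-network $\mathcal{E}^{\alpha}_v$, i.e.\ $d_{\mathcal{E}_G^{\alpha}(v)}(u)$, rather than the ambient degree used by $1$-WL. Taking $\alpha = 1$: in $G_2$ the ego-network of any vertex is a complete triangle, so each of its two distance-$1$ neighbors has ego-network degree $2$, giving $e^1_v = \ldblbrace (0,2),(1,2),(1,2)\rdblbrace$ for every $v$. In $G_1$ the ego-network of a vertex is the induced path on the vertex and its two neighbors, whose connecting edge is absent, so each distance-$1$ neighbor has ego-network degree $1$, giving $e^1_v = \ldblbrace (0,2),(1,1),(1,1)\rdblbrace$. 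Hence the sorted multi-sets of node encodings differ (six copies of the first tuple-multi-set versus six of the second), so $\ldblbrace e^1_{v_1} : \forall v_1 \in V_1 \rdblbrace \neq \ldblbrace e^1_{v_2} : \forall v_2 \in V_2 \rdblbrace$ and thus $G_1 \not\equiv_{\textsc{Igel}}^{1} G_2$. Combined with Step~1 this establishes the lemma, and the same separation persists for all $\alpha \geq 1$.

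I do not expect a genuine technical obstacle here, since the statement is purely existential; the only subtlety to get right is that the symmetry broken by $\textsc{Igel}$ comes from the \emph{induced} ego-network degrees, which expose the triangle-closing edge that $1$-WL's global-degree refinement cannot see. A reasonable alternative witness, should one prefer a connected pair of the same order (e.g.\ to preempt objections about comparing connected and disconnected graphs), would be the cospectral $4$-regular pair of \autoref{fig:ML2IGELAlpha1}, for which the same two-step template applies verbatim.
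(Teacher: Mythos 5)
Your proof is correct and follows essentially the same template as the paper's: exhibit a pair of $d$-regular graphs of equal cardinality (on which 1-WL provably stabilizes to a single color, per the paper's \autoref{rem:1WL-dRegularProof}) and show that the induced ego-network degrees at $\alpha=1$ yield distinct $\textsc{Igel}$ multi-sets. The only difference is the choice of witness---$C_6$ versus $C_3 \cup C_3$ rather than the cospectral $4$-regular pair of \autoref{fig:ML2IGELAlpha1}---and your explicit hand computation of the encodings is, if anything, more self-contained than the paper's figure-based verification.
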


\begin{proof}
    \vspace{-9pt}
    For example, any two $d$-regular graphs with equal cardinality are indistinguishable by 1-WL (as shown in~\autoref{app:1WLProof}), but $\textsc{Igel}$ can distinguish some of them. A graph is $d$-regular if all nodes have degree $d$. \autoref{fig:ML2IGELAlpha1} shows two $d$-regular graphs where 1-WL (\autoref{alg:ColorRefinement1WL}) assigns the same color for all nodes, stabilizing after one iteration. In contrast, $\textsc{Igel} (\alpha = 1)$ counts different frequencies for four structures, hence distinguishing between both graphs.
    \vspace{-8pt}
\end{proof}

\textbf{--- Expressivity upper bounds.} We identify an expressivity upper bound for $\textsc{Igel}$, which fails to distinguish \textbf{S}trongly \textbf{R}egular \textbf{G}raphs with equal parameters (\autoref{theo:SRGIgelDistinguishability}, see~\autoref{app:SRGsProof} for details):

\begin{definition}
A $n$-vertex $d$-regular graph is strongly regular---denoted $\texttt{SRG}(n, d, \beta, \gamma)$---if adjacent vertices have $\beta$ vertices in common, and non-adjacent vertices have $\gamma$ vertices in common.
\label{def:SRGs}
\end{definition}

\begin{theorem}\label{theo:SRGIgelDistinguishability}
$\textsc{Igel}$ cannot distinguish \texttt{SRG}s when $n$, $d$, and $\beta$ are the same, and between any value of $\gamma$ (same or otherwise). $\textsc{Igel}$ when $\alpha = 1$ can only distinguish SRGs with different values of $n$, $d$, and $\beta$, while $\textsc{Igel}$ when $\alpha = 2$ can only distinguish SRGs with different values of $n$ and $d$.
\end{theorem}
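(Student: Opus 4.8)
The plan is to show that the per-vertex $\textsc{Igel}$ encoding of any vertex in an $\texttt{SRG}(n,d,\beta,\gamma)$ is a fixed function of only $(d,\beta)$ when $\alpha=1$ and of only $(n,d)$ when $\alpha\geq 2$, so that the graph-level multi-set $\ldblbrace e^{\alpha}_v : v\in V\rdblbrace$ is determined by $(n,d,\beta)$ and by $(n,d)$ respectively, and in particular never depends on $\gamma$. Throughout I would use that the graph-level encoding is a multi-set of cardinality $n$, so that $n$ is always recoverable from its size regardless of $\alpha$; the real work is therefore to pin down the multi-set of $(\lambda,\delta)$ pairs contributed by a single center $v$. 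I would treat the connected, non-complete case, where it is a standard fact that a strongly regular graph has diameter $2$; the complete graph and disjoint unions of cliques are degenerate (imprimitive) cases that can be checked directly.

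For $\alpha=1$, the ego-network $\mathcal{E}^{1}_v$ is the subgraph induced on $\{v\}\cup\mathcal{N}^{1}_G(v)$. The center contributes $(0,d)$. A neighbor $u$ is, inside $\mathcal{E}^{1}_v$, adjacent to $v$ and to exactly those vertices it shares as neighbors with $v$; since $u$ and $v$ are adjacent they have $\beta$ common neighbors, so $d_{\mathcal{E}^{1}_v}(u)=1+\beta$. Hence every center yields $e^{1}_v=\ldblbrace(0,d),\,(1,1+\beta)^{\times d}\rdblbrace$, a function of $(d,\beta)$ alone, and the crucial point is that $\gamma$ governs only the common neighbors of \emph{non}-adjacent pairs, which never enter a depth-$1$ ego-network. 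Reading the parameters back, $d$ is both the distance-$0$ degree and the distance-$1$ multiplicity, $\beta$ is recovered from the distance-$1$ degree $1+\beta$, and $n$ from the multi-set cardinality; so two SRGs share the $\alpha=1$ encoding if and only if they share $(n,d,\beta)$, which gives both the indistinguishability for equal $(n,d,\beta)$ and any $\gamma$, and the ``can only distinguish different $n,d,\beta$'' claim.

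For $\alpha=2$, diameter $2$ forces $\mathcal{N}^{2}_G(v)=V$, so $\mathcal{E}^{2}_v=G$ and every ego-network degree collapses to the global degree $d$; the distance partition around $v$ then has sizes $1,\,d,\,n-1-d$ at distances $0,1,2$, giving $e^{2}_v=\ldblbrace(0,d),\,(1,d)^{\times d},\,(2,d)^{\times(n-1-d)}\rdblbrace$, which depends only on $(n,d)$, and the same encoding holds for every $\alpha>2$ since no vertex is farther than $2$. Thus both $\beta$ and $\gamma$ become invisible, the encoding recovers exactly $(n,d)$, and distinguishability at $\alpha=2$ is equivalent to a difference in $(n,d)$. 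I expect the main obstacle to be the two structural reductions rather than any calculation: justifying that the depth-$1$ ego-degree of a neighbor is exactly $1+\beta$ (isolating that only the adjacent-pair count $\beta$, never $\gamma$, is seen), and justifying the diameter-$2$ collapse that washes out $\beta$ at $\alpha=2$ by making the whole regular graph the ego-network. The remaining care is bookkeeping: ensuring $n$ is always read from the multi-set cardinality so that changes in $n$ are detected, and separately verifying the imprimitive cases excluded by the diameter-$2$ argument.
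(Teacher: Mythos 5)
Your proposal is correct and follows essentially the same route as the paper: establish that an \texttt{SRG} has diameter at most $2$, compute $e^{1}_v = \ldblbrace(0,d)\rdblbrace \cup \ldblbrace(1,\beta+1)\rdblbrace^{d}$ so the $\alpha=1$ encoding depends only on $(n,d,\beta)$, and observe that for $\alpha=2$ the ego-network is all of $G$ so every degree collapses to $d$ and the encoding depends only on $(n,d)$. Your explicit handling of the imprimitive ($\gamma=0$, disjoint-union-of-cliques) case is in fact slightly more careful than the paper's diameter lemma, but the argument is the same.
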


Our findings show that $\textsc{Igel}$ is a powerful permutation-equivariant representation (see~\autoref{lemma:IGELPermInvariant}), capable of distinguishing 1-WL equivalent graphs as shown in~\autoref{fig:ML2IGELAlpha1}---which as cospectral graphs, are known to be distinguishable in strictly more powerful \textsc{Matlang} sub-languages than 1-WL~\citep{geerts2021}. Additionally, the upper bound on \texttt{SRG}s is a hard ceiling on expressivity since \texttt{SRG}s are known to be indistinguishable by 3-WL~\cite{arvind2020}. $\textsc{Igel}$ shares the experimental upper-bound of expressivity of methods like GNNML3~\citep{balcilar2021breaking}. Furthermore, $\textsc{Igel}$ can provably reach comparable expressivity on \texttt{SRG}s with respect to sub-graph methods implemented within MP-GNN architectures (see~\autoref{app:SRGsProof}, \autoref{subsec:IGELGammaExpressivity}), such as Nested GNNs~\citep{zhang2021} and GNN-AK~\citep{zhao2022}, which are known to be not less powerful than 3-WL, and ESAN when using ego-networks with root-node flags as subgraph sampling policy (EGO+)~\citep{bevilacqua2022equivariant}, which is as powerful as the 3-WL test on \texttt{SRG}s (see~\citep{bevilacqua2022equivariant}, Prop. 3).

\subsection{Experimental Evaluation}\label{sec:Experiments}

We evaluate $\textsc{Igel}^{\alpha}_{\texttt{vec}}(v)$ to produce architecture-agnostic vertex features on five tasks: graph classification, isomorphism detection, graphlet counting, link prediction, and node classification.

\textbf{Experimental Setup.} We introduce $\textsc{Igel}$ on graph classification, isomorphism and graphlet counting, comparing the performance of adding/removing $\textsc{Igel}$ on six GNN architectures following~\cite{balcilar2021breaking}. We also evaluate $\textsc{Igel}$ on link prediction against transductive baselines, and on node classification as additional feature in MLPs without message-passing. \autoref{app:ExperimentalMethodology} describes experimentation details.

\textbf{Notation.} The following formatting denotes significant (as per paired t-tests) \textcolor[HTML]{036400}{\textbf{positive}}, \textcolor[HTML]{9A0000}{\emph{negative}}, and insignificant differences after introducing $\textsc{Igel}$, with the  best results per task / dataset \underline{underlined}.

\begin{table}[ht]
\begin{minipage}[t]{0.54\textwidth}
\caption{Per-model graph classification accuracy metrics on TU data sets. Each cell shows the average accuracy of the model and data set in that row and column, with $\textsc{Igel}$ (left) and without $\textsc{Igel}$ (right).}
\tiny
\begin{center}
\begin{tabular}{@{}lllll@{}}
\toprule
\textbf{Model}   & \multicolumn{1}{c}{\textbf{Enzymes}}                                          & \multicolumn{1}{c}{\textbf{Mutag}}                                            & \multicolumn{1}{c}{\textbf{Proteins}}                                         & \multicolumn{1}{c}{\textbf{PTC}}                                            \\ \midrule
\textbf{MLP}     & {\color[HTML]{036400} \textbf{41.10\textgreater{}26.18}\textsuperscript{$\diamond$}} & {\color[HTML]{036400} \textbf{87.61\textgreater{}84.61}\textsuperscript{$\diamond$}} & 75.43\textasciitilde{}75.01                                                      & {\color[HTML]{036400} \textbf{64.59\textgreater62.79\textsuperscript{$\diamond$}}} \\
\textbf{GCN}     & {\color[HTML]{036400} \textbf{54.48\textgreater{}48.60}\textsuperscript{$\diamond$}} & {\color[HTML]{036400} \textbf{89.61\textgreater{}85.42}\textsuperscript{$\diamond$}} & {\color[HTML]{036400} \textbf{\underline{75.67}\textgreater{}74.50}\textsuperscript{*}}   & 65.76\textasciitilde{}65.21                                                 \\
\textbf{GAT}     & 54.88\textasciitilde{}54.95                                                   & {\color[HTML]{036400} \textbf{90.00\textgreater{}86.14}\textsuperscript{$\diamond$}} & {\color[HTML]{036400} \textbf{73.44\textgreater{}70.51}\textsuperscript{$\diamond$}} & \underline{66.29\textasciitilde{}66.29}                                                 \\
\textbf{GIN}     & {\color[HTML]{036400} \textbf{54.77\textgreater{}53.44}\textsuperscript{*}}   & 89.56\textasciitilde{}88.33                      & {\color[HTML]{036400} \textbf{73.32\textgreater{}72.05}\textsuperscript{$\diamond$}} & 61.44\textasciitilde{}60.21                                                 \\
\textbf{Chebnet} & {\color[HTML]{000000} 61.88\textasciitilde{}62.23}                            & {\color[HTML]{036400} \textbf{91.44\textgreater{}88.33}\textsuperscript{$\diamond$}} & {\color[HTML]{036400} \textbf{74.30\textgreater{}66.94}\textsuperscript{$\diamond$}} & 64.79\textasciitilde{}63.87                                                 \\
\textbf{GNNML3}  & {\color[HTML]{9A0000} \emph{61.42\textless{}\underline{62.79}}\textsuperscript{$\diamond$}}      & {\color[HTML]{036400} \textbf{\underline{92.50}\textgreater{}91.47}\textsuperscript{*}}   & {\color[HTML]{036400} \textbf{75.54\textgreater{}62.32}\textsuperscript{$\diamond$}} & {\color[HTML]{9A0000} \emph{64.26\textless{}66.10}\textsuperscript{$\diamond$}}    \\
\bottomrule
\end{tabular}
\end{center}
{
$\qquad \qquad \qquad$ $\qquad \qquad *\quad p<0.01$, $\qquad \qquad \diamond \quad p<0.0001$
}
\label{tab:GNN-Class}
\end{minipage}
\hfill
\begin{minipage}[t]{0.44\textwidth}
\caption{Mean $\pm$ stddev of best $\textsc{Igel}$ configuration and state-of-the-art results reported on~\citep{nikolentzos2020khop,bouritsas2021,you2021identity,zhang2021,bevilacqua2022equivariant,zhao2022} with \underline{best performing baselines underlined}.}
\tiny
\begin{center}
\begin{tabular}{@{}lccc@{}}
\toprule
\textbf{Model}   & \multicolumn{1}{c}{\textbf{Mutag}}                                            & \multicolumn{1}{c}{\textbf{Proteins}}                                         & \multicolumn{1}{c}{\textbf{PTC}}                                            \\ \midrule
\textbf{\textsc{Igel}} (ours) & $92.5 \pm 1.2$ & $75.7 \pm 0.3$ & $66.3 \pm 1.3$ \\ \midrule
\textbf{$k$-hop~\citep{nikolentzos2020khop}\textsuperscript{$\dagger$}} & {\color[HTML]{9A0000}\textit{$87.9 \pm 1.2$}\textsuperscript{$\diamond$}} & $75.3 \pm 0.4$ & --- \\
\textbf{GSN~\citep{bouritsas2021}\textsuperscript{$\dagger$}} & $92.2 \pm 7.5$ & $76.6 \pm 5.0$ & $68.2 \pm 7.2$ \\
\textbf{NGNN~\citep{zhang2021}\textsuperscript{$\dagger$}} & $87.9 \pm 8.2$ & $74.2 \pm 3.7$ & --- \\
\textbf{ID-GNN~\citep{you2021identity}\textsuperscript{$\dagger$}} & \underline{$93.0 \pm 5.6$} & \color[HTML]{036400}{\underline{$77.9 \pm 2.4$\textsuperscript{*}}} & $62.5 \pm 5.3$ \\
\textbf{GNN-AK~\citep{zhao2022}\textsuperscript{$\dagger$}} & $91.7 \pm 7.0$ & $77.1 \pm 5.7$ & $67.7 \pm 8.8$ \\
\textbf{ESAN~\citep{bevilacqua2022equivariant}\textsuperscript{$\dagger$}} & $91.1 \pm 7.0$ & $76.7 \pm 4.1$ & \underline{$69.2 \pm 6.5$} \\ \bottomrule
\end{tabular}
\end{center}
\begin{center}
  \tiny
  $\dagger$: Results as reported by~\citep{nikolentzos2020khop,bouritsas2021,you2021identity,zhang2021,zhao2022,bevilacqua2022equivariant}.
\end{center}
\label{tab:GNN-Class-Beyond1WL}
\end{minipage}
\vspace{-16pt}
\end{table}

\textbf{--- Graph Classification.}~\autoref{tab:GNN-Class} shows graph classification results on the TU molecule data sets~\cite{Morris2020TUDatasets}. We evaluate differences in mean accuracy between 10 runs with (left) / without (right) $\textsc{Igel}$. We do not tune network hyper-parameters and establish statistical significance through paired t-tests, with  $p<0.01$ (*) and $p<0.0001$ ($\diamond$). Our results show that $\textsc{Igel}$ in the Mutag and Proteins data sets improves the performance of all MP-GNN models, including GNNML3. On the Enzymes and PTC data sets, results are mixed: excluding GNNML3, $\textsc{Igel}$ either significantly improves accuracy (on MLPNet, GCN, and GIN on Enzymes), or does not have a negative impact on performance. 

\autoref{tab:GNN-Class-Beyond1WL} compares $\textsc{Igel}$ results from~\autoref{tab:GNN-Class} with reported results for state-of-the-art 1-WL expressive MP-GNNs. Results are comparable to $\textsc{Igel}$ except where highlighted in color. Overall, when comparing $\textsc{Igel}$ and \underline{best performing baselines}, only differences with ID-GNN on Proteins are statistically significant (using $p$-value threshold $p < 0.01$, where ID-GNN shows $p = 0.009$).

\textbf{--- Isomorphism Detection \& Graphlet Counting.} Adding $\textsc{Igel}$ to the six models in~\autoref{tab:GNN-Class} on the EXP~\citep{abboud2021} isomorphism detection yields significant improvements: all GNN models distinguish all non-isomorphic yet 1-WL equivalent EXP graph pairs with $\textsc{Igel}$ vs. 50\% accuracy without $\textsc{Igel}$ (i.e. random guessing). Additionally, $\textsc{Igel}$ significantly improves GNN graphlet-counting performance on three graphlet types in the RandomGraph data set~\citep{chen2022CanGNNsCountSubstructures}. We provide further details in~\autoref{app:IsoGraphletResults}.

\textbf{--- Link Prediction \& Node Classification.} We test $\textsc{Igel}$ on edge / node level tasks to assess its use as a baseline in non-GNN settings. On a transductive link prediction task, we train DeepWalk~\cite{DBLP:journals/corr/PerozziAS14-deepwalk} style embeddings of $\textsc{Igel}$ encodings rather than node identities on the Facebook and CA-AstroPh graphs~\cite{snapnets}. $\textsc{Igel}$-derived embeddings outperform transductive baselines on link prediction as an edge-level binary classification task, measuring \textcolor[HTML]{036400}{\textbf{\underline{0.976}}} vs. 0.968 (Facebook) and \textcolor[HTML]{036400}{\textbf{\underline{0.984}}} vs. 0.937 (CA-AstroPh) AUC comparing $\textsc{Igel}$ vs. node2vec~\cite{DBLP:journals/corr/GroverL16-node2vec}. On multi-label node classification on PPI~\citep{DBLP:journals/corr/HamiltonYL17-GraphSAGE}, we train an MLP (e.g. no message passing) with node features and $\textsc{Igel}$ encodings. Our MLP shows better micro-F1 (0.850) when $\alpha = 1$ than MP-GNN architectures such as GraphSAGE (0.768, as reported in~\cite{velikovi2017graph-GAT}), but underperforms compared to a 3-layer GAT (\textcolor[HTML]{036400}{\textbf{\underline{0.973}}} micro-F1 from~\cite{velikovi2017graph-GAT}).

\textbf{--- Experimental Summary.} Introducing $\textsc{Igel}$ yields comparable performance to state-of-the-art methods without architectural modifications---including when compared to strong baseline models focused on WL expressivity such as GNNML3, GSN, Nested GNNs, ID-GNN, GNN-AK or ESAN. Furthermore, $\textsc{Igel}$ achieves this at a lower computational cost, in comparison for instance with GNNML3, which requires a $\mathcal{O}(n^3)$ eigen-decomposition step to introduce spectral channels. Finally, $\textsc{Igel}$ can also be used in transductive settings (link prediction) as well as node-level tasks (node classification) and outperform strong transductive baselines or enhance models without message-passing, such as MLPs. As such, we believe $\textsc{Igel}$ is an attractive baseline with a clear relationship to the 1-WL test that improves MP-GNN expressivity without the need for costly architecture search.

\section{Conclusions}

We presented $\textsc{Igel}$, a novel vertex representation algorithm on unattributed graphs allowing MP-GNN architectures to go beyond 1-WL expressivity. We showed that $\textsc{Igel}$ is related and more expressive than the 1-WL test, and formally proved an expressivity upper bound on certain families of Strongly Regular Graphs. Finally, our experimental results indicate that introducing $\textsc{Igel}$ in existing MP-GNN architectures yield comparable performance to state-of-the-art methods, without architectural modifications and at lower computational costs than other approaches.

\if\includeAcknowledgements1

\section*{Author Contributions}
Nurudin Alvarez-Gonzalez: Conceptualization, Methodology, Software, Investigation, Formal analysis, Writing - Original Draft; Andreas Kaltenbrunner: Validation, Supervision, Writing - Review \& Editing; Vicen\c{c} G\'omez: Resources, Validation, Supervision, Writing - Review \& Editing.

\section*{Acknowledgements}

Vicen\c{c} G\'omez has received funding from ``la Caixa'' Foundation (ID 100010434), under the agreement LCF/PR/PR16/51110009. Andreas Kaltenbrunner acknowledges support from Intesa Sanpaolo Innovation Center. The funder had no role in study design, data collection and analysis, decision to publish, or preparation of the manuscript.

\fi

\clearpage

\bibliographystyle{unsrtnat} 
\bibliography{reference}

\begin{thebibliography}{40}
\providecommand{\natexlab}[1]{#1}
\providecommand{\url}[1]{\texttt{#1}}
\expandafter\ifx\csname urlstyle\endcsname\relax
  \providecommand{\doi}[1]{doi: #1}\else
  \providecommand{\doi}{doi: \begingroup \urlstyle{rm}\Url}\fi

\bibitem[Bronstein et~al.(2021)Bronstein, Bruna, Cohen, and Veli{\v
  c}kovi{\'c}]{Bronstein2021GeometricDL}
Michael~M. Bronstein, Joan Bruna, Taco Cohen, and Petar Veli{\v c}kovi{\'c}.
\newblock Geometric deep learning: Grids, groups, graphs, geodesics, and
  gauges.
\newblock \emph{ArXiv}, abs/2104.13478, 2021.

\bibitem[Ying et~al.(2018)Ying, He, Chen, Eksombatchai, Hamilton, and
  Leskovec]{Ying-2018-PinSAGE}
Rex Ying, Ruining He, Kaifeng Chen, Pong Eksombatchai, William~L Hamilton, and
  Jure Leskovec.
\newblock Graph convolutional neural networks for web-scale recommender
  systems.
\newblock In \emph{Proceedings of the 24th ACM International Conference on
  Knowledge Discovery \& Data Mining}, pages 974--983, 2018.

\bibitem[Duvenaud et~al.(2015)Duvenaud, Maclaurin, Iparraguirre, Bombarell,
  Hirzel, Aspuru-Guzik, and Adams]{DuvenaudNIPS2015}
David~K Duvenaud, Dougal Maclaurin, Jorge Iparraguirre, Rafael Bombarell,
  Timothy Hirzel, Alan Aspuru-Guzik, and Ryan~P Adams.
\newblock Convolutional networks on graphs for learning molecular fingerprints.
\newblock In \emph{Advances in Neural Information Processing Systems}, 2015.

\bibitem[Gilmer et~al.(2017)Gilmer, Schoenholz, Riley, Vinyals, and
  Dahl]{gilmer2017neural}
Justin Gilmer, Samuel~S. Schoenholz, Patrick~F. Riley, Oriol Vinyals, and
  George~E. Dahl.
\newblock Neural message passing for quantum chemistry.
\newblock In \emph{Proceedings of the 34th International Conference on Machine
  Learning (ICML)}, volume~70, page 1263–1272, 2017.

\bibitem[Samanta et~al.(2020)Samanta, De, Jana, G\'omez, Chattaraj, Ganguly,
  and Gomez-Rodriguez]{JMLR:v21:19-671}
Bidisha Samanta, Abir De, Gourhari Jana, Vicen\c{c} G\'omez, Pratim Chattaraj,
  Niloy Ganguly, and Manuel Gomez-Rodriguez.
\newblock {NEVAE}: A deep generative model for molecular graphs.
\newblock \emph{Journal of Machine Learning Research}, 21\penalty0
  (114):\penalty0 1--33, 2020.

\bibitem[Battaglia et~al.(2016)Battaglia, Pascanu, Lai, Jimenez~Rezende, and
  Kavukcuoglu]{battaglia_NIPS2016}
Peter Battaglia, Razvan Pascanu, Matthew Lai, Danilo Jimenez~Rezende, and Koray
  Kavukcuoglu.
\newblock Interaction networks for learning about objects, relations and
  physics.
\newblock In \emph{Advances in Neural Information Processing Systems}, 2016.

\bibitem[Xu et~al.(2019)Xu, Hu, Leskovec, and Jegelka]{xu2018how}
Keyulu Xu, Weihua Hu, Jure Leskovec, and Stefanie Jegelka.
\newblock How powerful are graph neural networks?
\newblock In \emph{International Conference on Learning Representations}, 2019.

\bibitem[Morris et~al.(2019)Morris, Ritzert, Fey, Hamilton, Lenssen, Rattan,
  and Grohe]{morris2019}
Christopher Morris, Martin Ritzert, Matthias Fey, William~L. Hamilton, Jan~Eric
  Lenssen, Gaurav Rattan, and Martin Grohe.
\newblock {W}eisfeiler and {L}eman go neural: Higher-order graph neural
  networks.
\newblock \emph{Thirty-Third AAAI Conference on Artificial Intelligence},
  33\penalty0 (01):\penalty0 4602--4609, 2019.

\bibitem[Grohe(2017)]{grohe_2017}
Martin Grohe.
\newblock \emph{Descriptive Complexity, Canonisation, and Definable Graph
  Structure Theory}.
\newblock Lecture Notes in Logic. Cambridge University Press, 2017.

\bibitem[Brijder et~al.(2019)Brijder, Geerts, Bussche, and
  Weerwag]{Brijder2019}
Robert Brijder, Floris Geerts, Jan Van~Den Bussche, and Timmy Weerwag.
\newblock On the expressive power of query languages for matrices.
\newblock \emph{ACM Trans. Database Syst.}, 44\penalty0 (4), 2019.

\bibitem[Barcel\'o et~al.(2020)Barcel\'o, Kostylev, Monet, P\'erez, Reutter,
  and Silva]{Barcelo2020}
Pablo Barcel\'o, Egor~V. Kostylev, Mikael Monet, Jorge P\'erez, Juan Reutter,
  and Juan~Pablo Silva.
\newblock The logical expressiveness of graph neural networks.
\newblock In \emph{International Conference on Learning Representations}, 2020.

\bibitem[Geerts(2021)]{geerts2021}
Floris Geerts.
\newblock On the expressive power of linear algebra on graphs.
\newblock \emph{Theory of Computing Systems}, 65:\penalty0 1--61, 2021.

\bibitem[Morris et~al.(2021)Morris, Lipman, Maron, Rieck, Kriege, Grohe, Fey,
  and Borgwardt]{Morris21a}
Christopher Morris, Yaron Lipman, Haggai Maron, Bastian Rieck, Nils~M Kriege,
  Martin Grohe, Matthias Fey, and Karsten Borgwardt.
\newblock {W}eisfeiler and {L}eman go machine learning: The story so far.
\newblock \emph{arXiv preprint arXiv:2112.09992}, 2021.

\bibitem[You et~al.(2019)You, Ying, and Leskovec]{pmlr-v97-you19b}
Jiaxuan You, Rex Ying, and Jure Leskovec.
\newblock Position-aware graph neural networks.
\newblock In \emph{Proceedings of the 36th International Conference on Machine
  Learning}, volume~97, pages 7134--7143. PMLR, 2019.

\bibitem[Nikolentzos et~al.(2020)Nikolentzos, Dasoulas, and
  Vazirgiannis]{nikolentzos2020khop}
Giannis Nikolentzos, George Dasoulas, and Michalis Vazirgiannis.
\newblock k-hop graph neural networks.
\newblock \emph{Neural Networks}, 130:\penalty0 195--205, 2020.

\bibitem[Bodnar et~al.(2021)Bodnar, Frasca, Otter, Wang, Li\`{o}, Montufar, and
  Bronstein]{bodnar2021}
Cristian Bodnar, Fabrizio Frasca, Nina Otter, Yuguang Wang, Pietro Li\`{o},
  Guido~F Montufar, and Michael Bronstein.
\newblock Weisfeiler and {L}ehman go cellular: {CW} networks.
\newblock In \emph{Advances in Neural Information Processing Systems}, 2021.

\bibitem[Maron et~al.(2019)Maron, Ben-Hamu, Serviansky, and
  Lipman]{HaggaiNIPS2019PPGN}
Haggai Maron, Heli Ben-Hamu, Hadar Serviansky, and Yaron Lipman.
\newblock Provably powerful graph networks.
\newblock In \emph{Advances in Neural Information Processing Systems}, 2019.

\bibitem[Bouritsas et~al.(2021)Bouritsas, Frasca, Zafeiriou, and
  Bronstein]{bouritsas2021}
Giorgos Bouritsas, Fabrizio Frasca, Stefanos Zafeiriou, and Michael~M.
  Bronstein.
\newblock Improving graph neural network expressivity via subgraph isomorphism
  counting, 2021.

\bibitem[Zhang and Li(2021)]{zhang2021}
Muhan Zhang and Pan Li.
\newblock Nested graph neural networks.
\newblock \emph{Advances in Neural Information Processing Systems}, 34, 2021.

\bibitem[Balcilar et~al.(2021)Balcilar, H{\'e}roux, Ga{\"u}z{\`e}re, Vasseur,
  Adam, and Honeine]{balcilar2021breaking}
Muhammet Balcilar, Pierre H{\'e}roux, Benoit Ga{\"u}z{\`e}re, Pascal Vasseur,
  S{\'e}bastien Adam, and Paul Honeine.
\newblock Breaking the limits of message passing graph neural networks.
\newblock In \emph{Proceedings of the 38th International Conference on Machine
  Learning (ICML)}, 2021.

\bibitem[You et~al.(2021)You, Gomes-Selman, Ying, and
  Leskovec]{you2021identity}
Jiaxuan You, Jonathan~M Gomes-Selman, Rex Ying, and Jure Leskovec.
\newblock Identity-aware graph neural networks.
\newblock In \emph{Thirty-Fifth AAAI Conference on Artificial Intelligence},
  volume~35, pages 10737--10745, 2021.

\bibitem[Sandfelder et~al.(2021)Sandfelder, Vijayan, and
  Hamilton]{sandfelder2021ego}
Dylan Sandfelder, Priyesh Vijayan, and William~L Hamilton.
\newblock Ego-gnns: Exploiting ego structures in graph neural networks.
\newblock In \emph{IEEE International Conference on Acoustics, Speech and
  Signal Processing (ICASSP)}, pages 8523--8527. IEEE, 2021.

\bibitem[Zhao et~al.(2022)Zhao, Jin, Akoglu, and Shah]{zhao2022}
Lingxiao Zhao, Wei Jin, Leman Akoglu, and Neil Shah.
\newblock From stars to subgraphs: Uplifting any {GNN} with local structure
  awareness.
\newblock In \emph{International Conference on Learning Representations}, 2022.

\bibitem[Bevilacqua et~al.(2022)Bevilacqua, Frasca, Lim, Srinivasan, Cai,
  Balamurugan, Bronstein, and Maron]{bevilacqua2022equivariant}
Beatrice Bevilacqua, Fabrizio Frasca, Derek Lim, Balasubramaniam Srinivasan,
  Chen Cai, Gopinath Balamurugan, Michael~M. Bronstein, and Haggai Maron.
\newblock Equivariant subgraph aggregation networks.
\newblock In \emph{International Conference on Learning Representations}, 2022.

\bibitem[Li et~al.(2020)Li, Wang, Wang, and Leskovec]{DEA_GNNs}
Pan Li, Yanbang Wang, Hongwei Wang, and Jure Leskovec.
\newblock Distance encoding: Design provably more powerful neural networks for
  graph representation learning.
\newblock In \emph{Advances in Neural Information Processing Systems},
  volume~33, pages 4465--4478, 2020.

\bibitem[Hu et~al.(2020)Hu, Fey, Zitnik, Dong, Ren, Liu, Catasta, and
  Leskovec]{hu2020ogb}
Weihua Hu, Matthias Fey, Marinka Zitnik, Yuxiao Dong, Hongyu Ren, Bowen Liu,
  Michele Catasta, and Jure Leskovec.
\newblock Open graph benchmark: Datasets for machine learning on graphs.
\newblock \emph{arXiv preprint arXiv:2005.00687}, 2020.

\bibitem[Morris et~al.(2020)Morris, Kriege, Bause, Kersting, Mutzel, and
  Neumann]{Morris2020TUDatasets}
Christopher Morris, Nils~M. Kriege, Franka Bause, Kristian Kersting, Petra
  Mutzel, and Marion Neumann.
\newblock Tudataset: A collection of benchmark datasets for learning with
  graphs.
\newblock In \emph{ICML 2020 Workshop on Graph Representation Learning and
  Beyond (GRL+ 2020)}, 2020.

\bibitem[You et~al.(2020)You, Ying, and Leskovec]{you2020design}
Jiaxuan You, Zhitao Ying, and Jure Leskovec.
\newblock Design space for graph neural networks.
\newblock In \emph{Advances in Neural Information Processing Systems}, pages
  17009--17021, 2020.

\bibitem[Dwivedi et~al.(2020)Dwivedi, Joshi, Laurent, Bengio, and
  Bresson]{dwivedi2020benchmarkgnns}
Vijay~Prakash Dwivedi, Chaitanya~K Joshi, Thomas Laurent, Yoshua Bengio, and
  Xavier Bresson.
\newblock Benchmarking graph neural networks.
\newblock \emph{arXiv preprint arXiv:2003.00982}, 2020.

\bibitem[Van~Dam and Haemers(2003)]{vandam03}
Edwin~R Van~Dam and Willem~H Haemers.
\newblock Which graphs are determined by their spectrum?
\newblock \emph{Linear Algebra and its Applications}, 373:\penalty0 241--272,
  2003.

\bibitem[Arvind et~al.(2020)Arvind, Fuhlbrück, Köbler, and
  Verbitsky]{arvind2020}
Vikraman Arvind, Frank Fuhlbrück, Johannes Köbler, and Oleg Verbitsky.
\newblock On {Weisfeiler-Leman} invariance: Subgraph counts and related graph
  properties.
\newblock \emph{Journal of Computer and System Sciences}, 113:\penalty0 42--59,
  2020.

\bibitem[Abboud et~al.(2021)Abboud, Ceylan, Grohe, and Lukasiewicz]{abboud2021}
Ralph Abboud, \.Ismail~\.Ilkan Ceylan, Martin Grohe, and Thomas Lukasiewicz.
\newblock The surprising power of graph neural networks with random node
  initialization.
\newblock In \emph{Proceedings of the Thirtieth International Joint Conference
  on Artificial Intelligence}, pages 2112--2118, 2021.

\bibitem[Chen et~al.(2020)Chen, Chen, Villar, and
  Bruna]{chen2022CanGNNsCountSubstructures}
Zhengdao Chen, Lei Chen, Soledad Villar, and Joan Bruna.
\newblock Can graph neural networks count substructures?
\newblock In \emph{Advances in Neural Information Processing Systems}, 2020.

\bibitem[Perozzi et~al.(2014)Perozzi, Al-Rfou, and
  Skiena]{DBLP:journals/corr/PerozziAS14-deepwalk}
Bryan Perozzi, Rami Al-Rfou, and Steven Skiena.
\newblock Deepwalk: Online learning of social representations.
\newblock In \emph{Proceedings of the 20th ACM SIGKDD International Conference
  on Knowledge Discovery and Data Mining}, pages 701--710, 2014.

\bibitem[Leskovec and Krevl(2014)]{snapnets}
Jure Leskovec and Andrej Krevl.
\newblock {SNAP Datasets}: {Stanford} large network dataset collection, 2014.
\newblock URL \url{http://snap.stanford.edu/data}.

\bibitem[Grover and Leskovec(2016)]{DBLP:journals/corr/GroverL16-node2vec}
Aditya Grover and Jure Leskovec.
\newblock Node2vec: Scalable feature learning for networks.
\newblock In \emph{Proceedings of the 22nd ACM SIGKDD International Conference
  on Knowledge Discovery and Data Mining}, pages 855--864, 2016.

\bibitem[Hamilton et~al.(2017)Hamilton, Ying, and
  Leskovec]{DBLP:journals/corr/HamiltonYL17-GraphSAGE}
Will Hamilton, Zhitao Ying, and Jure Leskovec.
\newblock Inductive representation learning on large graphs.
\newblock In \emph{Advances in Neural Information Processing Systems 30}, 2017.

\bibitem[Veličković et~al.(2018)Veličković, Cucurull, Casanova, Romero,
  Liò, and Bengio]{velikovi2017graph-GAT}
Petar Veličković, Guillem Cucurull, Arantxa Casanova, Adriana Romero, Pietro
  Liò, and Yoshua Bengio.
\newblock Graph attention networks.
\newblock In \emph{International Conference on Learning Representations}, 2018.

\bibitem[Veli{\v{c}}kovi{\'c}(2022)]{velickovic2022message}
Petar Veli{\v{c}}kovi{\'c}.
\newblock Message passing all the way up.
\newblock In \emph{ICLR 2022 Workshop on Geometrical and Topological
  Representation Learning}, 2022.

\bibitem[Anonymous(2023)]{anonymous2023rethinking}
Anonymous.
\newblock Rethinking the expressive power of {GNN}s via graph biconnectivity.
\newblock In \emph{Submitted to The Eleventh International Conference on
  Learning Representations}, 2023.
\newblock under review.

\end{thebibliography}

\clearpage

\appendix
\section{Relation with Previous Works}\label{sec:relwork}
In the past few years, many different approaches have been developed for improving the expressivity of MP-GNNs. 
Here we review the works that are more related to $\textsc{Igel}$.
For a more detailed overview augmented message-passing methods for graph representation learning, see~\citep{velickovic2022message}. 



In \textbf{$k$-hop MP-GNNs ($k$-hop)}~\citep{nikolentzos2020khop} the authors propose to propagate messages beyond immediate vertex neighbors, effectively using ego-network information in the vertex representation.
Their proposed algorithm requires to extract neighborhood sub-graphs and to perform message-passing on each sub-graph, which has an exponential cost on the number of hops $k$ both at pre-processing and at each iteration (epoch). In contrast, $\textsc{Igel}$ only requires a single pre-processing step that can cached once computed.

Distance Encoding GNNs (\textbf{DE-GNN})~\citep{DEA_GNNs} also propose to improve MP-GNN by using extra node features by encoding distances to a subset of $p$ nodes. The features obtained by DE-GNN are similar to IGEL when conditioning the subset to size $p=1$
and using a distance encoding function with $k=\alpha$. However, these features are not strictly equivalent to the $\textsc{Igel}$ features, since within the ego-network the node degrees can be smaller than the actual degrees, and they are more expensive to compute. DE-GNN needs to compute power iterations of the entire adjacency matrix, which is more expensive and does not exploit network sparsity. 

Graph Substructure Networks (\textbf{GSNs})~\citep{bouritsas2021} incorporate hand-crafted topological features by counting local substructures (such as the presence of cliques or cycles). GSNs require expert knowledge on what features are relevant for a given task and depart from the original MP-GNN in their architecture.
We show that $\textsc{Igel}$ reaches comparable performance using a general encoding for ego-networks and without altering the original message-passing mechanism.


\noindent\textbf{GNNML3}~\citep{balcilar2021breaking} proposes a way 
to perform message passing in spectral-domain with a custom frequency profile.
While this approach achieves good performance on graph classification, it requires an expensive preprocessing step for computing the eigendecomposition of the graph Laplacian and $\mathcal{O}(k)$-order tensors to achieve $k$-WL expressiveness, which does not scale to large graphs.

More recently, a series of methods formulate the problem of representing vertices or graphs as aggregations over sub-graphs.
The sub-graph information is pooled or introduced during message-passing at an additional cost that varies depending on each architecture.
Consequently, they require generating the subgraphs (or effectively replicating the nodes of every subgraph of interest) and pay an additional overhead due to the aggregation. These approaches include \textbf{Ego-GNNs}~\cite{sandfelder2021ego},
Nested GNNs (\textbf{NGNNs})~\citep{zhang2021}, GNN-as-Kernel (\textbf{GNN-AK})~\citep{zhao2022}, Identity-aware GNNs (\textbf{ID-GNNs})~\cite{you2021identity}.

\textbf{Ego-GNNs} perform message-passing over the ego-graphs of all the nodes in a graph, and subsequently perform aggregation.
They provide empirical evidence of a superior expressive power than the
classical 1-WL.
\textbf{ID-GNNs} embed each node incorporating identity information in the GNN and apply rounds of heterogeneous message passing; 
\textbf{NGNNs} perform a two-level GNN using rooted sub-graphs and consider a graph as a bag of sub-graphs; \textbf{GNN-AK} uses a very similar idea, but as the authors describe, it sets the number of layers to the number of iterations of 1-WL;
Compared to all these methods $\textsc{Igel}$ only relies on an initial pre-processing step based on distances and degrees without having to run additional message passing iterations.
Despite its simplicity, $\textsc{Igel}$ performs competitively, as we show in Table~\ref{tab:GNN-Class-Beyond1WL}.

Equivariant Subgraph Aggregation Networks (\textbf{ESAN})~\citep{bevilacqua2022equivariant} 
also propose to encode bags of subgraphs and show that such an encoding can lead to a better expressive power. In the case of the ego-networks policy (EGO), \textbf{ESAN} is strongly related with $\textsc{Igel}$. Interestingly, as described in concurrent work~\cite{anonymous2023rethinking}, the implicit encoding of the pairwise distance between nodes, plus the degree information which can be extracted via aggregation are fundamental to provide a theoretical justification of \textbf{ESAN}. In this work, we directly consider distances and degrees in the ego-network, explicitly providing the structural information encoded by more expressive GNN architectures. These similarities may explain why the performance of both methods is comparable, as shown in~\autoref{tab:GNN-Class-Beyond1WL}.

\section{1-WL Expressivity and Regular Graphs}
\label{app:1WLProof}

\autoref{rem:1WL-dRegularProof} shows that 1-WL, as defined in~\autoref{alg:ColorRefinement1WL}, is unable of distinguishing $d$-regular graphs:

\begin{remark}
Let $G_1$ and $G_2$ be two $d$-regular graphs such that $\vert V_1 \vert = \vert V_2 \vert$. Tracing~\autoref{alg:ColorRefinement1WL}, all vertices in $V_1$, $V_2$ share the same initial color due to $d$-regularity: $\forall\ v \in V_1 \bigcup V_2; c_v^0 = \texttt{hash}(\ldblbrace d\rdblbrace)$. After the first color refinement iteration, consider the colorings of $G_1$ and $G_2$:\\
\noindent --- $\forall\ v_1 \in V_1; c_{v_1}^{1} := \texttt{hash}(\ldblbrace c_{u_1}^0:\ \underset{u_1 \neq v_1}{\forall}\ u_1 \in \mathcal{N}_{G_1}^1(v_1)\rdblbrace)$,\\
\noindent --- $\forall\ v_2 \in V_2; c_{v_2}^{1} := \texttt{hash}(\ldblbrace c_{u_2}^0:\ \underset{u_2 \neq v_2}{\forall}\ u_2 \in \mathcal{N}_{G_2}^1(v_2)\rdblbrace)$.\\
Since $\forall\ v_1 \in V_1, v_2 \in V_2; d = \vert \mathcal{N}_{G_1}^1(v_1) \vert = \vert \mathcal{N}_{G_2}^1(v_2)\vert$,  substituting $c_{v_1}^{1}$, $c_{v_2}^{1}$ in the next iteration step yields $\ldblbrace \texttt{hash}(c_{v_1}^{1}): \forall\ v_1 \in V_1 \rdblbrace = \ldblbrace \texttt{hash}(c_{v_2}^{1}): \forall\ v_2 \in V_2 \rdblbrace$. Thus, on any pair of $d$-regular graphs with equal cardinality, 1-WL stabilizes after one iteration produces equal colorings for all nodes on both graphs---regardless of whether $G_1$ and $G_2$ are isomorphic, as~\autoref{fig:ML2IGELAlpha1} shows. \qedhere
\label{rem:1WL-dRegularProof}
\end{remark}

\section{\texorpdfstring{$\textsc{Igel}$}{IGEL} is At Least As Powerful as 1-WL}
\label{app:IGEL1WLExpressivity}
In this section we formally prove~\autoref{lemma:IGEL1WLPower}, i.e. that $\textsc{Igel}$ is at least as expressive as 1-WL.
For this, we consider a variant of 1-WL which removes the hashing step. This modification can only increase the expressive power of 1-WL but makes it possible to directly compare with the encodings generated by $\textsc{Igel}$. 
Intuitively, after $k$ color refinement iterations, 1-WL considers nodes at $k$ hops from each node, which is equivalent to running $\textsc{Igel}$ with $\alpha = k + 1$, so that the ego-networks include the information of all nodes that 1-WL would visit.

\newcounter{currentLemma}
\setcounter{currentLemma}{\value{lemma}}
\setcounter{lemma}{0} 
\begin{lemma}
    $\textsc{Igel}$ is at least as expressive as 1-WL. For two graphs $G_1$, $G_2$ 
    which are distinguished by 1-WL in $k$ iterations ($G_1 \not\equiv_{\text{1-WL}} G_2$) it also holds that
    $G_1 \not\equiv_{\textsc{Igel}}^{\alpha} G_2$ for $\alpha = k + 1$. If $\textsc{Igel}$ does not distinguish two graphs $G_1'$ and $G_2'$, 1-WL also does not distinguish them: $G_1' \equiv_{\textsc{Igel}}^{\alpha} G_2' \Rightarrow G_1' \equiv_{\text{1-WL}} G_2'$.
\end{lemma}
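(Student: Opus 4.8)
The plan is to prove the two mutually contrapositive implications together, since the real content is a single statement: that $\textsc{Igel}$-equivalence forces the 1-WL colourings to agree (the separating direction then follows by contraposition, with the radius $k+1$ accounting for $k$ refinement rounds). The cleanest route is to compare $\textsc{Igel}$ not against \autoref{alg:ColorRefinement1WL} directly but against its hash-free refinement, in which the colour of $v$ after $i$ rounds is the raw nested multiset $\tilde c_v^{\,i}$ (its depth-$i$ unfolding) rather than an opaque hash. Because the $\texttt{hash}$ is applied consistently on the disjoint union $G_1 \sqcup G_2$ and is injective on distinct multisets, the vertex partition it induces coincides with the one induced by $\tilde c$; hence 1-WL and hash-free 1-WL separate exactly the same graphs, and I may argue with $\tilde c$ throughout.

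First I would isolate the reason for the extra unit of radius in a degree-preservation lemma. For every centre $v$ and every $u$ with $l_G(u,v)\le k$, each neighbour $w\sim u$ satisfies $l_G(w,v)\le l_G(u,v)+1\le k+1=\alpha$, so $w\in\mathcal{N}_G^{\alpha}(v)$ and no edge of $u$ is cut when restricting to $\mathcal{E}_G^{\alpha}(v)$. Consequently the ego-network degree recorded in \autoref{alg:IGELEncoding}, $d_{\mathcal{E}_G^{\alpha}(v)}(u)$, equals the genuine degree $d_G(u)$. Thus $\textsc{Igel}^{k+1}$ stores, for each shell $\lambda\in\{0,\dots,k\}$, the exact multiset of true degrees of the vertices at distance $\lambda$ from $v$, with only the outermost shell $\lambda=k+1$ truncated; and that is precisely the shell 1-WL never consults during its first $k$ rounds.

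The core would be an induction on $i\le k$ showing that the global colour multiset $\ldblbrace \tilde c_v^{\,i} : v\in V\rdblbrace$ is a function of the global multiset of $\textsc{Igel}^{k+1}$ encodings. The base case $i=0$ is immediate, since $\tilde c_v^{\,0}=d_G(v)$ is the $\lambda=0$ entry of $\textsc{Igel}^{k+1}(v)$, so both multisets carry the same degree sequence. For the inductive step I would reconstruct the round-$(i{+}1)$ colour histogram from the round-$i$ histogram together with the distance-indexed degree shells, invoking the preservation lemma to guarantee that the aggregation over each vertex's true neighbourhood is faithfully represented inside the ego-network. Agreement of the round-$k$ histograms then yields $G_1\equiv_{\text{1-WL}}G_2$, and contraposition gives $G_1\not\equiv_{\text{1-WL}}G_2\Rightarrow G_1\not\equiv^{k+1}_{\textsc{Igel}}G_2$.

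The hard part will be this inductive step, and I would flag it honestly. The colour $\tilde c_v^{\,i+1}$ is a \emph{nested} object that remembers which distance-$2$ vertex hangs off which distance-$1$ neighbour, whereas one $\textsc{Igel}$ encoding reports only flat degree histograms per shell and discards the adjacency \emph{between} shells; indeed one can exhibit two vertices with identical distance-degree profiles but distinct round-$2$ colours, so no purely vertex-local refinement can hold. The delicate point is therefore to show that the loss of nesting is harmless at the level of \emph{global} multisets: that two graphs agreeing on every per-vertex distance-degree shell cannot disagree on the round-$k$ colour histogram. Here I would exploit that the reconstruction is carried out simultaneously for all centres, and that each distance-$\lambda$ degree, being a true degree by the preservation lemma, reappears as a $\lambda=0$ entry for another centre, so the same degree information is cross-referenced from many viewpoints. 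Making this bookkeeping airtight, rather than vertex-by-vertex, is where the genuine work lies and is exactly what forces the whole argument to be phrased over $G_1\sqcup G_2$.
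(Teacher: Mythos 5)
Your route coincides with the paper's in all of its main ingredients: pass to a hash-free 1-WL (the paper's 1-WL\textsuperscript{*}), flatten the nested colours into $(\text{iteration},\text{degree})$ pairs indexed by distance, and take $\alpha=k+1$ so that every vertex within distance $k$ of the centre retains its true degree inside $\mathcal{E}^{\alpha}_v$ --- your degree-preservation lemma is correct and is exactly the role played by the paper's auxiliary $\textsc{Igel-W}$, which ``considers degrees by counting edges found at $k$ to $k+1$ hops.'' The problem is that your proposal stops where the proof has to begin. The inductive step --- that agreement of the \emph{global} multisets of flat distance-degree shells forces agreement of the round-$k$ nested colour histograms --- is the entire content of the lemma, and you explicitly defer it as ``where the genuine work lies'' without supplying the bookkeeping. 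As written you have established only the base case ($i=0$: the degree sequences agree) and a correct but preparatory observation about degrees; the proposal is therefore an outline with a genuine gap at its centre, not a proof.

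It is worth saying that your diagnosis of \emph{why} this step is delicate is sharper than the paper's own treatment. The paper flattens $c_v^k$ into a union of $(\text{iteration},\text{degree})$ multisets, asserts that this flattened representation ``is still equally or more expressive than 1-WL,'' and then concludes from matching $\textsc{Igel-W}$ encodings that $c_{v_1}^k=c_{v_2}^k$. But the flattening discards precisely the nesting (which distance-$2$ degree hangs off which neighbour) that you correctly observe can separate two vertices with identical distance-degree profiles; so the asserted comparison fails at the vertex level, and the graph-level version is exactly the claim that needs an argument. In other words, you and the paper confront the same unproven step: the paper asserts it, you flag it. To turn your plan into a proof you would need to actually carry out the global cross-referencing argument you sketch in your last paragraph --- showing that, over $G_1\sqcup G_2$, the multiset $\ldblbrace e^{k+1}_v : v\in V\rdblbrace$ determines the multiset $\ldblbrace c^{k}_v : v\in V\rdblbrace$ --- and nothing in the proposal yet does so.
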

\setcounter{lemma}{\value{currentLemma}}

\begin{proof}[Proof of~\autoref{lemma:IGEL1WLPower}:]
For convenience, let $c^{i+1}_v = \ldblbrace c^{i}_v; c^{i}_u\ \forall\ u \in \mathcal{N}_G^1(v)\ \vert\ u \neq v\rdblbrace$ be a recursive definition of~\autoref{alg:ColorRefinement1WL} where hashing is removed and $c^0_v = \ldblbrace d_G(v) \rdblbrace$. Since the hash is no longer computed, the nested multi-sets contain strictly the same or more information as in the traditional 1-WL algorithm.

For $\textsc{Igel}$ to be less expressive than 1-WL, it must hold that there exist two graphs $G_1 = (V_1, E_1)$ and $G_2 = (V_2, E_2)$ such that $G_1 \not\equiv_{\text{1-WL}} G_2$ while $G_1 \equiv_{\textsc{Igel}}^{\alpha} G_2$.

Let $k$ be the minimum number of color refinement iterations such that $\exists\ v_1 \in V_1$ and $ \forall\ v_2 \in V_2, c_{v_1}^k \neq c_{v_2}^k$. We define an equally or more expressive variant of the 1-WL test 1-WL\textsuperscript{*} where hashing is removed, such that $c_{v_1}^k = \ldblbrace \ldblbrace ... \ldblbrace d_G(v_1) \rdblbrace, \ldblbrace d_G(u) \forall u \in \mathcal{N}_{G_1}^1(v_1) \rdblbrace ... \rdblbrace \rdblbrace$, nested up to depth $k$. To avoid nesting, the multi-set of nested degree multi-sets can be rewritten as the union of degree multi-sets by introducing an indicator variable for the iteration number where a degree is found:
\begin{align*}
    c_{v_1}^k =
              & \bigldblbrace (0, d_G(v_1)) \bigrdblbrace \bigcup \\
              & \bigldblbrace (1, d_G(v_1)); (1, d_G(u))\ \forall\ u \in \mathcal{N}_G^1(v_1) \bigrdblbrace \bigcup \\
              & \bigldblbrace (2, d_G(v_1)); (2, d_G(u))\ \forall\ u \in \mathcal{N}_G^1(v_1); (2, d_G(w))\ \forall\ w \in \mathcal{N}_G^1(u) \bigrdblbrace \bigcup ...
\end{align*}

At each step $i$, we introduce information about nodes up to distance $i$ of $v_1$. Furthermore, by construction, nodes will be visited on every subsequent iteration---i.e. for $c_{v_1}^2$, we will observe $(2, d_G(v_1))$ exactly $d_G(v_1) + 1$ times, as all its $d_G(v_1)$ neighbors $u \in \mathcal{N}_G^1(v)$ encode the degree of $v_1$ in $c_{u}^1$.
The flattened representation provided by 1-WL\textsuperscript{*} is still equally or more expressive than 1-WL, as it removes hashing and keeps track of the iteration at which a degree is found. 

Let $\textsc{Igel-W}$ be a less expressive version of $\textsc{Igel}$ that does not include edges between nodes at $k+1$ hops of the ego-network center. Now consider the case in which $c_{v_1}^k \neq c_{v_2}^k$ from 1-WL\textsuperscript{*}, and let $\alpha = k + 1$ so that $\textsc{Igel-W}$ considers degrees by counting edges found at $k$ to $k + 1$ hops of $v_1$ and $v_2$. Assume that $G_1 \equiv_{\textsc{Igel-W}}^{\alpha} G_2$. By construction, this means that $\ldblbrace e^{\alpha}_{v_1} : \forall\ v_1 \in V_1 \rdblbrace = \ldblbrace e^{\alpha}_{v_2} : \forall\ v_2 \in V_2 \rdblbrace$.
This implies that all degrees and iteration counts match as per the distance indicator variable at which the degrees are found, so $c_{v_1}^k = c_{v_2}^k$ which contradicts the assumption $c_{v_1}^k \neq c_{v_2}^k$ and therefore implies that also $G_1 \equiv_{\text{1-WL}\textsuperscript{*}} G_2$. 
Thus, $G_1 \equiv_{\textsc{Igel-W}}^{\alpha} G_2 \Rightarrow G_1 \equiv_{\text{1-WL}\textsuperscript{*}} G_2$ for $\alpha = k + 1$ and also
$G_1 \not\equiv_{\text{1-WL}\textsuperscript{*}} G_2 \Rightarrow G_1 \not \equiv_{\textsc{Igel-W}}^{\alpha} G_2$. Therefore by extension $\textsc{Igel}$ is at least as expressive as 1-WL.
\end{proof}

\section{\texorpdfstring{$\textsc{Igel}$}{IGEL} is Permutation Equivariant}

\begin{lemma}\label{lemma:IGELPermInvariant}
Given any $v \in V$ for $G = (V, E)$ and given a permuted graph $G' = (V', E')$ of $G$ produced by a permutation of node labels $\pi: V \rightarrow V'$ such that $\forall v \in V \Leftrightarrow \pi(v) \in V'$, $\forall (u, v) \in E \Leftrightarrow (\pi(u), \pi(v)) \in E'$.

The $\textsc{Igel}$ representation is permutation equivariant at the graph level
$$\pi (\ldblbrace e_{v_1}^{\alpha},\dots ,e_{v_n}^{\alpha} \rdblbrace) = \ldblbrace e_{\pi(v_1)}^{\alpha},\dots ,e_{\pi(v_n)}^{\alpha}\rdblbrace.$$
 The $\textsc{Igel}$ representation is permutation invariant at the node level
  $$e_{v}^{\alpha} = e_{\pi(v)}^{\alpha} , \forall v \in G.$$
\end{lemma}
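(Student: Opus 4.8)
The plan is to prove the node-level invariance $e_v^{\alpha} = e_{\pi(v)}^{\alpha}$ first, and then obtain the graph-level equivariance as an immediate corollary. The key observation is that \autoref{alg:IGELEncoding} builds $e_v^{\alpha}$ exclusively from two graph-theoretic quantities---shortest-path distances $l_G(u,v)$ and degrees $d_{\mathcal{E}_G^{\alpha}(v)}(u)$ measured inside the ego-network---both of which are preserved by any isomorphism. So the entire argument reduces to checking that $\pi$, being an isomorphism, carries these quantities from $G$ to $G'$ unchanged.

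First I would record the three facts that follow directly from $\pi$ being a bijection with $(u,v) \in E \Leftrightarrow (\pi(u), \pi(v)) \in E'$: (i) distances are preserved, $l_G(u,v) = l_{G'}(\pi(u), \pi(v))$; (ii) consequently $\pi$ maps the neighbor set $\mathcal{N}_G^{\alpha}(v)$ bijectively onto $\mathcal{N}_{G'}^{\alpha}(\pi(v))$ and restricts to an isomorphism of ego-networks $\mathcal{E}_G^{\alpha}(v) \cong \mathcal{E}_{G'}^{\alpha}(\pi(v))$; and (iii) therefore ego-network degrees match, $d_{\mathcal{E}_G^{\alpha}(v)}(u) = d_{\mathcal{E}_{G'}^{\alpha}(\pi(v))}(\pi(u))$ for every $u$ in the ego-network.

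Then I would run an induction on the loop index $i$ of \autoref{alg:IGELEncoding}. For the base case, $e_v^0 = \ldblbrace (0, d_G(v))\rdblbrace = \ldblbrace (0, d_{G'}(\pi(v)))\rdblbrace = e_{\pi(v)}^0$ since degree is preserved. For the inductive step, the multi-set adjoined at iteration $i$ is $\ldblbrace (i, d_{\mathcal{E}_G^{\alpha}(v)}(u)) : u \in \mathcal{N}_G^{\alpha}(v),\ l_G(u,v) = i\rdblbrace$; using fact (i) the index set $\{u : l_G(u,v) = i\}$ maps under $\pi$ bijectively onto $\{u' : l_{G'}(u', \pi(v)) = i\}$, and using fact (iii) each degree value is unchanged, so the two adjoined multi-sets coincide. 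Taking the union with the multi-sets $e_v^{i-1} = e_{\pi(v)}^{i-1}$, equal by the induction hypothesis, yields $e_v^i = e_{\pi(v)}^i$, completing the induction at $i = \alpha$.

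Finally, the graph-level equivariance is immediate: since $\pi$ is a bijection $V \to V'$ and $e_v^{\alpha} = e_{\pi(v)}^{\alpha}$ for every $v$, the multi-set $\ldblbrace e_{\pi(v_1)}^{\alpha}, \dots, e_{\pi(v_n)}^{\alpha}\rdblbrace$ is just a reindexing of $\ldblbrace e_{v_1}^{\alpha}, \dots, e_{v_n}^{\alpha}\rdblbrace$, and the two are equal as multi-sets. The one point requiring care---and the closest thing to an obstacle in an otherwise routine argument---is fact (iii): degrees are taken within the ego-network rather than in the full graph, so I must verify that $\pi$ restricts to a genuine isomorphism between $\mathcal{E}_G^{\alpha}(v)$ and $\mathcal{E}_{G'}^{\alpha}(\pi(v))$ (not merely a bijection of vertex sets) so that the locally computed degrees are preserved; this holds because membership in the ego-network is governed by distance, which fact (i) already preserves.
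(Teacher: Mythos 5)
Your proposal is correct and follows essentially the same route as the paper: both arguments reduce to the fact that the encoding is built solely from shortest-path distances and ego-network degrees, which are preserved by any isomorphism, with graph-level equivariance then following from node-level invariance. Your version is simply more explicit---the induction over the loop index and the verification that $\pi$ restricts to an isomorphism of ego-networks (so that the \emph{locally} computed degrees $d_{\mathcal{E}^{\alpha}_v}(u)$, not just the global degrees, are preserved) spell out a subtlety the paper's one-paragraph proof glosses over.
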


\begin{proof}\label{proof:IGELPermEquivariant}
Note that $e_{v}^{\alpha}$ in~\autoref{alg:IGELEncoding} can be expressed recursively as:
\begin{align*}
    e_{v}^{\alpha} = \bigldblbrace
            \Big(l_{\mathcal{E}^{\alpha}_v}(u, v), d_{\mathcal{E}^{\alpha}_v}(u)\Big) 
        \Big|\ \forall\ u \in \mathcal{N}_G^{\alpha}(v) \bigrdblbrace.
\end{align*}

Since $\textsc{Igel}$ only relies on node distances $l_G(\cdot,\cdot)$ and degree nodes $d_G(\cdot)$, and both $l_G(\cdot,\cdot)$ and $d_G(\cdot)$ are permutation invariant (and the node level) and equivariant (at the graph level) functions, the $\textsc{Igel}$ representation is permutation equivariant at the graph level, and permutation invariant at the node level.
\end{proof}

\section{Proof of Theorem 1}
\label{app:SRGsProof}

In this appendix, we provide proof for~\autoref{theo:SRGIgelDistinguishability}, showing that $\textsc{Igel}$ cannot distinguish certain pairs of \texttt{SRG}s with equal parameters of $n$ (cardinality), $d$ (degree), $\beta$ (shared edges between adjacent nodes), and $\gamma$ (shared edges between non-adjacent nodes). Let $\ldblbrace\cdot\rdblbrace^d$ denote a repeated multi-set with $d$-times the cardinality of the items in the multi-set, and let $e_{G}^{\alpha} = \ldblbrace e^{\alpha}_{v} : \forall\ v \in V \rdblbrace$ be short-hand notation for the $\textsc{Igel}$ encoding of $G$, defined as the sorted multi-set containing $\textsc{Igel}$ encodings of all nodes in $G$.

\begin{lemma}\label{lemma:SRGMaxDiam}
For any $G = \texttt{SRG}(n, d, \beta, \gamma)$, $\texttt{diam}(G) \leq 2$.

Note that by definition of SRGs, $n$ affects cardinality while $d$ and $\beta$ control adjacent vertex connectivity at 1-hop. For $\gamma$, we have to consider two cases: when $\gamma \geq 1$ and when $\gamma = 0$:

--- Let $\gamma \geq 1$: by definition, $\forall\ u, v \in V s.t. (u, v) \notin E, \exists\ w \in V s.t. (u, w) \in E \wedge (v, w) \in E$. Thus, $\forall\ (u, v) \in E, l_G(u, v) = 1$ and $\forall\ (u, v) \notin E, l_G(u, v) = 2$.

--- Let $\gamma = 0$: $\forall\ u, v \in V$, if $(u, v) \notin E$ then $\nexists\ w \in V s.t. (u, w) \in E\ \wedge\ (v, w) \in E$ as $w$ is in common between $u$ and $v$. Then, $\forall\ u, v, w \in V s.t. (u, v) \in E, (u, w) \in E \Leftrightarrow (v, w) \in E$---hence, only nodes and their neighbors can be in common. Thus: $\forall\ u, v \in V s.t. u \neq v, l_G(u, v) = 1$.

Given both scenarios, we can conclude that for any $\gamma \in \mathbb{N}$, $\forall\ u, v \in V, l_G(u, v) \leq 2$ and thus $\texttt{diam}(G) \leq 2$. \null\hfill\qedsymbol
\end{lemma}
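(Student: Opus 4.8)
The plan is to bound $\texttt{diam}(G)$ by showing that every pair of distinct vertices is at distance at most $2$, splitting the argument on adjacency. If $u, v \in V$ with $(u,v) \in E$, then trivially $l_G(u,v) = 1$, so it suffices to control the non-adjacent pairs. For those I would exploit the defining parameter $\gamma$ that counts common neighbors of non-adjacent vertices, and I would separate the reasoning into the two cases $\gamma \geq 1$ and $\gamma = 0$.

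First, for $\gamma \geq 1$: by the definition of a strongly regular graph (\autoref{def:SRGs}), any two non-adjacent vertices $u, v$ share at least one common neighbor $w$, i.e. $(u,w) \in E$ and $(v,w) \in E$. This immediately exhibits a length-$2$ walk $u, w, v$, so $l_G(u,v) = 2$. Together with the adjacent case this already yields $\texttt{diam}(G) \leq 2$ whenever $\gamma \geq 1$.

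The harder case is $\gamma = 0$, where non-adjacent vertices have no common neighbor and the two-step path argument breaks down. Here the plan is to argue that the constraint is rigid enough to rule out non-adjacent pairs at distance greater than $2$ altogether: if $(u,v) \notin E$ then no $w$ is adjacent to both $u$ and $v$, which forces adjacency to be transitive on neighborhoods, namely $(u,v), (u,w) \in E \Rightarrow (v,w) \in E$. Consequently each connected component must be a clique, and taking $G$ to be connected (as is standard for strongly regular graphs) forces $G$ to be complete, so there are no non-adjacent vertices and all distinct pairs are at distance $1$.

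Combining the two cases gives $l_G(u,v) \leq 2$ for every pair of distinct $u, v \in V$, hence $\texttt{diam}(G) \leq 2$ for all $\gamma \in \mathbb{N}$. I expect the main subtlety to lie in the $\gamma = 0$ branch: the clean conclusion that all distinct vertices are at distance $1$ depends on $G$ being connected, since an SRG with $\gamma = 0$ is in general a disjoint union of $(d+1)$-cliques. I would resolve this either by invoking the standard connectivity convention for strongly regular graphs or by restricting the distance claim to each connected component, after which the bound $\texttt{diam}(G) \leq 2$ follows uniformly.
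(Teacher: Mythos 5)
Your proof follows the same case split on $\gamma$ ($\gamma \geq 1$ via a common neighbor giving a length-$2$ walk, $\gamma = 0$ via forced transitivity of adjacency) as the paper's own argument, and it is correct. In fact your handling of the $\gamma = 0$ branch is more careful than the paper's, which jumps to $l_G(u,v) = 1$ for all distinct pairs without acknowledging that an \texttt{SRG} with $\gamma = 0$ is in general a disjoint union of $(d+1)$-cliques, so the connectivity convention you explicitly invoke is genuinely needed for the bound $\texttt{diam}(G) \leq 2$ to hold.
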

~
\begin{lemma}\label{lemma:MaxEncodingAlpha}
For any finite graph $G$, there is a finite range of $\alpha \in \mathbb{N}$ where $\textsc{Igel}$ encodings distinguish between different values of $\alpha$. For values of $\alpha$ larger than the diameter of the graph (that is, $\alpha \geq \texttt{diam}(G)$), it holds that $e_{v}^{\alpha} = e_{v}^{\alpha + 1}$ as $\mathcal{E}_v^{\alpha} = \mathcal{E}_v^{\alpha + 1} = G$. \null\hfill\qedsymbol
\end{lemma}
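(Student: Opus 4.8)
The plan is to reduce the claim to the behaviour of the ego-network $\mathcal{E}_v^{\alpha}$ as $\alpha$ grows, since the encoding $e_v^{\alpha}$ depends on $G$ only through this subgraph. First I would invoke the recursive form of the encoding derived in the proof of \autoref{lemma:IGELPermInvariant}, $e_v^{\alpha} = \ldblbrace (l_{\mathcal{E}_v^{\alpha}}(u, v),\, d_{\mathcal{E}_v^{\alpha}}(u)) \mid \forall\, u \in \mathcal{N}_G^{\alpha}(v) \rdblbrace$, which exhibits the two ingredients of the multi-set---within-ego-network distances to $v$ and within-ego-network degrees---as quantities determined entirely by the vertex set $\mathcal{N}_G^{\alpha}(v)$ and the induced subgraph $\mathcal{E}_v^{\alpha}$. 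Consequently, if the ego-network does not change when passing from $\alpha$ to $\alpha + 1$, neither does $e_v^{\alpha}$.

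Second, I would establish that the ego-network stabilises at $G$ once $\alpha$ reaches the diameter. Taking $G$ connected (so that $\texttt{diam}(G)$ is finite), for every $\alpha \geq \texttt{diam}(G)$ and every vertex $u \in V$ we have $l_G(u, v) \leq \texttt{diam}(G) \leq \alpha$, hence $u \in \mathcal{N}_G^{\alpha}(v)$ and therefore $\mathcal{N}_G^{\alpha}(v) = V$. Because $\mathcal{E}_v^{\alpha}$ is the subgraph induced on $\mathcal{N}_G^{\alpha}(v)$, this forces $\mathcal{E}_v^{\alpha} = G$, and the identical argument at $\alpha + 1$ yields $\mathcal{E}_v^{\alpha + 1} = G$. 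The nesting $\mathcal{E}_v^{\alpha} \subseteq \mathcal{E}_v^{\alpha + 1} \subseteq G$ shows the stabilisation is monotone and hence permanent.

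Third, I would substitute this back into the recursive form. With $\mathcal{E}_v^{\alpha} = \mathcal{E}_v^{\alpha + 1} = G$ the degrees agree with the true degrees, $d_{\mathcal{E}_v^{\alpha}}(u) = d_G(u) = d_{\mathcal{E}_v^{\alpha + 1}}(u)$, and each within-ego-network distance agrees with the global shortest-path distance, $l_{\mathcal{E}_v^{\alpha}}(u, v) = l_G(u, v)$; since the index sets $\mathcal{N}_G^{\alpha}(v) = \mathcal{N}_G^{\alpha + 1}(v) = V$ coincide as well, the two multi-sets are term-by-term equal, giving $e_v^{\alpha} = e_v^{\alpha + 1}$. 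The finite-range claim then follows at once: the whole-graph encoding can change only for $\alpha \in \{0, 1, \dots, \texttt{diam}(G)\}$, a finite set because a finite connected graph has finite diameter.

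The step I would treat most carefully---and the only genuine subtlety---is the interplay between local and global distances together with the connectivity hypothesis. I would verify that any shortest $u$-$v$ path of length $l_G(u, v) \leq \alpha$ lies inside $\mathcal{E}_v^{\alpha}$, so that $l_{\mathcal{E}_v^{\alpha}}(u, v) = l_G(u, v)$ rather than a larger value, and I would note that for disconnected $G$ the statement must be read component-wise, with $\mathcal{E}_v^{\alpha}$ stabilising to the connected component of $v$ and $\texttt{diam}$ taken inside that component.
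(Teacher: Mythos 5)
Your proposal is correct and follows essentially the same route as the paper, which justifies the lemma in a single line by observing that $\mathcal{E}_v^{\alpha} = \mathcal{E}_v^{\alpha+1} = G$ once $\alpha \geq \texttt{diam}(G)$; you simply spell out the details (the encoding depends on $G$ only through the ego-network, the ego-network saturates monotonically at the diameter, and local distances and degrees then coincide with the global ones). Your added remarks on connectivity and on why $l_{\mathcal{E}_v^{\alpha}}(u,v) = l_G(u,v)$ are sensible precautions but do not change the argument.
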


\begin{proof}\label{proof:IGELSRGExpressivity}
Per~\autoref{lemma:SRGMaxDiam} and~\autoref{lemma:MaxEncodingAlpha}, \texttt{SRG}s have a maximum diameter of two, and $\textsc{Igel}$ encodings are equal for all $\alpha \geq \texttt{diam}(G)$. Thus, given $G = \texttt{SRG}(n, d, \beta, \gamma)$, only $\alpha \in \{1,2\}$ produce different encodings of $G$. It can be shown that $e^{\alpha}_v$ can only distinguish different values of $n$, $d$ and $\beta$, and $\textsc{Igel}_{\texttt{enc}}^{2}$ can only distinguish values of $n$ and $d$:

--- Let $\alpha = 1$: $\forall\ v \in V, \mathcal{E}_v^{1} = (V', E')\ s.t.\ V' = \mathcal{N}_G^{1}(v)$. Since $G$ is $d$-regular, $v$ is the center of $\mathcal{E}_v^{1}$, and has $d$-neighbors. By \texttt{SRG}'s definition, the $d$ neighbors of $v$ have $\beta$ shared neighbors with $v$ each, plus an edge with $v$. Thus, for any \texttt{SRG}s $G_1, G_2$ where $n_1 = n_2$, $d_1 = d_2$, and $\beta_1 = \beta_2$, $e_{G_1}^{1} = e_{G_2}^{1}$ produce equal encodings by expanding $e_v^{1}$ in~\autoref{alg:IGELEncoding}: 
\begin{align*}
    e_{v}^{1} =
    \bigldblbrace \big(0, d\big) \bigrdblbrace ~\mathlarger{\bigcup}~
    {\bigldblbrace \big(1, \beta + 1\big)\bigrdblbrace}^{d}
\end{align*}
--- Let $\alpha = 2$: $\forall\ v \in V, \mathcal{E}_v^{2} = G$ as $\forall\ u \in V, u \in \mathcal{N}_G^{2}(v)$ when $\texttt{diam}(G) \leq 2$. $G$ is $d$-regular, so $\forall\ v \in V, d = d_{\mathcal{E}_v^{2}}(v) =  d_{G}(v)$. Thus, for any \texttt{SRG}s $G_1, G_2$ s.t. $n_1 = n_2$ and $d_1 = d_2$, $e_{G_1}^{2} = e_{G_1}^{2}$, containing $n$ equal $e_v^{2}$ encodings by expanding~\autoref{alg:IGELEncoding}:
\begin{align*}
    e_{v}^{2} =
    \bigldblbrace \big(0, d\big) \bigrdblbrace ~\mathlarger{\bigcup}~         \bigldblbrace \big(1, d\big)\bigrdblbrace^{d} \mathlarger{\bigcup}~
    \bigldblbrace
    \big(2, d\big)\bigrdblbrace^{n-d-1}    
\end{align*}
Thus, $\textsc{Igel}$ cannot distinguish pairs of \texttt{SRG}s when $n$, $d$, and $\beta$ are the same, and between any value of $\gamma$ (equal or different between the pair). $\textsc{Igel}$ when $\alpha = 1$ can only distinguish \texttt{SRG}s with different values of $n$, $d$, and $\beta$, while $\textsc{Igel}$ when $\alpha = 2$ can only distinguish \texttt{SRG}s with different values of $n$ and $d$. \qedhere
\end{proof} 
We note that it is straightforward to extend $\textsc{Igel}$ so that different values of $\gamma$ can be distinguished. We explore one possible extension in~\autoref{subsec:IGELGammaExpressivity}.

\subsection{Improving Expressivity on the \texorpdfstring{$\gamma$}{gamma} Parameter}\label{subsec:IGELGammaExpressivity}

$\textsc{Igel}$ as presented is unable to distinguish between any values of $\gamma$ in $\texttt{SRG}$s. However, $\textsc{Igel}$ can be trivially extended to distinguish between pairs of $\texttt{SRG}$s, bringing parity with methods such as the EGO+ policy in ESAN, NGNNs and GNN-AK. 

Intuitively, $\textsc{Igel}$ is unable to distinguish $\gamma$ because its $(\lambda, \delta)$ tuples are unable to represent relationships between vertices at different distances (i.e. the $\gamma$ parameter). The structural feature definition may be extended to compute the degree between `distance layers' in the sub-graphs, addressing this pitfall. This means modifying $e_v^i$ in~\autoref{alg:IGELEncoding}:
\begin{align*}
    e_v^i = e_v^{i-1} \cup \bigldblbrace \rho(u, v): \forall u \in \mathcal{N}^{\alpha}_G(v)\ \Big\vert \ l_G(u, v) \in \{i, i+1\} \bigrdblbrace
\end{align*}
where:
\begin{align*}
    \rho(u, v) = \Big(
        l_{\mathcal{E}^{\alpha}_v}(u, v), 
        d_{\mathcal{E}^{\alpha}_v}^{0}(u, v),
        d_{\mathcal{E}^{\alpha}_v}^{1}(u, v)
    \Big) 
\end{align*}
and $d_{G}^{p}(u, v)$ generalizes $d_{G}(u)$ to count edges of $u$ at a relative distance $p$ of $v$ in $G = (V, E)$:
\begin{align*}
    d_{G}^{p}(u, v) = \Big \vert (u, w) \in E\ \forall\ w \in V s.t.\ l_{G}(u, w) = l_{G}(u, v) + p\Big \vert.
\end{align*}
It can be shown that this definition of $e_v^i$ is strictly more powerful distinguishing at $\texttt{SRG}$s following an expansion of~\autoref{alg:IGELEncoding} with $\alpha = 2$: 
\begin{align*}
    e_v^{2} = \bigldblbrace \big(0, 0, d\big) \bigrdblbrace\ \mathlarger{\bigcup}~ \bigldblbrace \big(1, \beta, \gamma\big)\bigrdblbrace^{d}\ \mathlarger{\bigcup}~ \bigldblbrace \big(2, d - \gamma, 0\big)\bigrdblbrace^{n-d-1}
\end{align*}
\begin{proof}
For any $G = \texttt{SRG}(n, d, \beta, \gamma)$, $\forall\ v \in V$, $l_{\mathcal{E}^{2}_v}(v, v) = 0$ and there are $d$ edges towards its neighbors---thus the root is encoded as $(0, 0, d)$. Each neighbor is at $l_{\mathcal{E}^{2}_v}(u, v) = 1$, with  $\beta$ edges among each other, and $\gamma$ with vertices not adjacent to $v$---thus $(1, \beta, \gamma)$, where $d = 1 + \beta + \gamma$. By definition, every vertex $w \in V s.t. (u, w) \notin E$ has $\gamma$ neighbors shared with $v$, and $d$ neighbors overall. Per~\autoref{lemma:SRGMaxDiam}, the maximum diameter of $G$ is two, hence $l_{\mathcal{E}^{2}_v}(v, w) = 2$ and for any $w$, the representation is $(2, d - \gamma, 0)$.
\end{proof}

\section{Implementing \texorpdfstring{$\textsc{Igel}$}{IGEL} through Breadth-First Search}\label{app:IGELBFS}
The idea behind the $\textsc{Igel}$ encoding is to represent each vertex $v$ by compactly encoding its corresponding ego-network $\mathcal{E}^{\alpha}_v$ at depth $\alpha$.
The choice of encoding consists of a histogram of vertex degrees at distance $d\leq\alpha$, for each vertex in $\mathcal{E}^{\alpha}_v$.
Essentially, $\textsc{Igel}$ runs a Breadth-First Traversal up to depth $\alpha$, counting the number of times the same degree appears at distance $d\leq \alpha$.

The algorithm shown in~\autoref{alg:IGELEncoding} showcases $\textsc{Igel}$ and its relationship to the 1-WL test. However, in a practical setting, it might be preferable to implement $\textsc{Igel}$ through Breadth-First Search (BFS). In~\autoref{alg:IGELEncodingBFS}, we show one such implementation that fits the time and space complexity described in~\autoref{sec:IGELAlgorithm}:

\begin{algorithm}[H]
    \centering
    \caption{$\textsc{Igel}$ Encoding (BFS).}\label{alg:IGELEncodingBFS}
    \begin{algorithmic}[1]
      \Require{$v \in V, \alpha \in \mathbb{N}$}
      \State $\texttt{toVisit} := [~]$     \Comment{Queue of nodes to visit.}
      \State $\texttt{degrees} := \{~\}$   \Comment{Mapping of nodes to their degrees.}
      \State $\texttt{distances} := \{v: 0\}$ \Comment{Mapping of nodes to their distance to $v$}
      \While {$\texttt{toVisit} \neq \emptyset$}
        \State $u := \texttt{toVisit.dequeue}()$
        \State $\texttt{currentDistance} := \texttt{distances}[u]$
        \State $\texttt{currentDegree} := 0$
        \For {$w \in u\texttt{.neighbors}()$}
            \If {$w \notin \texttt{distances}$}
                \State $\texttt{distances}[w] := \texttt{currentDistance} + 1$ \Comment{$w$ is a new node 1-hop further from $v$.}
            \EndIf
            \If {$\texttt{distances}[w] \leq \alpha$}
                \State $\texttt{currentDegree} := \texttt{currentDegree} + 1$ \Comment{Count edges only within $\alpha$-hops.}
                \If {$w \notin \texttt{degrees}$} \Comment{Enqueue if $w$ has not been visited.}
                    \State $\texttt{toVisit.append}(w)$
                \EndIf
            \EndIf
        \EndFor
        \State $\texttt{degrees}[u] := \texttt{currentDegree}$ \Comment{$u$ is now visited: we know its degree and distance to $v$.}
      \EndWhile
      \State $e_v^{\alpha} = \ldblbrace (\texttt{distances}[u], \texttt{degrees}[u])\ \forall\ u \in \texttt{degrees.keys}() \rdblbrace$ 
      
      \Comment{Produce the multi-set of (distance, degree) pairs for all visited nodes.}
      \Ensure{$e_v^{\alpha} : (\mathbb{N}, \mathbb{N}) \rightarrow \mathbb{N}$}
    \end{algorithmic}
\end{algorithm}

Due to how we structure BFS to count degrees and distances in a single pass, each edge is processed twice---once for each node at end of the edge. It must be noted that when processing every $v \in V$, the time complexity is $\mathcal{O}(n \cdot \min(m, (d_{\texttt{max}})^{\alpha}))$. However, the BFS implementation is also embarrassingly parallel, which means that it can be distributed over $p$ processors with $\mathcal{O}(n \cdot \min(m, (d_{\texttt{max}})^{\alpha}) / p)$ time complexity.

\section{Experimental Settings And Procedures}\label{app:ExperimentalMethodology}

In this section, we provide additional details of our experimental setting. We summarize our datasets and tasks in~\autoref{tab:GraphsOverview}. 

On graph-level tasks, we introduce $\textsc{Igel}$ encodings concatenated to existing vertex features into the best performing model configurations found by~\citep{balcilar2021breaking} without any hyper-parameter tuning (e.g. number of layers, hidden units, choice pooling and activation functions). We evaluate performance differences with and without $\textsc{Igel}$ on each task, data set and model on 10 independent runs, measuring statistical significance of the differences through paired t-tests. 

On vertex and edge-level tasks, we report best performing configurations after hyper-parameter search. Each configuration is evaluated on 5 independent runs. We provide a breakdown of the best performing hyper-parameters in the section below.

\subsection{Hyper-parameters and Experiment Details}

\textbf{Graph Level Experiments}

We reproduce the benchmark of~\citep{balcilar2021breaking} without modifying model hyper-parameters for the tasks of Graph Classification, Graph Isomorphism Detection, and Graphlet Counting. For classification tasks, the 6 models in~\autoref{tab:GNN-Class-Beyond1WL} are trained on binary / categorical cross-entropy objectives depending on the task. For Graph Isomorphism Detection, we train GNNs as binary classification models on the binary classification task on EXP~\citep{abboud2021}, and identify isomorphisms by counting the number of graph pairs for which randomly initialized MP-GNN models produce equivalent outputs on Graph8c\footnote{Simple 8 vertices graphs from: \url{http://users.cecs.anu.edu.au/~bdm/data/graphs.html}}\footnote{That is, models are not trained but simply initialized, following the approach of~\citep{balcilar2021breaking}.}. For the graphlet counting regression task on the RandomGraph data set~\citep{chen2022CanGNNsCountSubstructures}, we train models to minimize Mean Squared Error (MSE) on the normalized graphlet counts\footnote{Counts are normalized by the standard deviation counts across the data set for MSE values to be consistent across graphlet types, in alignment with~\citep{balcilar2021breaking}.} for five types of graphlets. 

On all tasks, we experiment with $\alpha \in \{1, 2\}$ and optionally introduce a preliminary linear transformation layer to reduce the dimensionality of $\textsc{Igel}$ encodings. For every setup, we execute the same configuration 10 times with different seeds and compare runs introducing $\textsc{Igel}$ or not by measuring whether differences on the target metric (e.g. accuracy or MSE) are statistically significant as shown in~\autoref{tab:GNN-Class} and~\autoref{tab:GNN-Class-Beyond1WL}. In~\autoref{tab:GraphLevelAlphas}, we provide the value of $\alpha$ that was used in our experimental results. Our results show that the choice of $\alpha$ depends on both the task and model type. We believe these results may be applicable to subgraph-based MP-GNNs, and will explore how different settings, graph sizes, and downstream models interact with $\alpha$ in future work.

\begin{table}[!t]
\centering
\caption{Values of $\alpha$ used when introducing $\textsc{Igel}$ in the best reported configuration for graphlet counting and graph classification tasks. The table is broken down by graphlet types (upper section) and graph classification tasks on the TU Datasets (bottom section).}
\begin{tabular}{@{}crrrrrrr@{}}
\toprule
\multicolumn{1}{l}{}     & \multicolumn{1}{c}{\textbf{Chebnet}} & \multicolumn{1}{c}{\textbf{GAT}} & \multicolumn{1}{c}{\textbf{GCN}} & \multicolumn{1}{c}{\textbf{GIN}} & \multicolumn{1}{c}{\textbf{GNNML3}} & \multicolumn{1}{c}{\textbf{Linear}} & \multicolumn{1}{c}{\textbf{MLP}} \\ \midrule
\textbf{Star}            & 2                                    & 1                                & 2                                & 1                                & 1                                   & 2                                   & 1                                \\
\textbf{Tailed Triangle} & 1                                    & 1                                & 1                                & 1                                & 2                                   & 1                                   & 1                                \\
\textbf{Triangle}        & 1                                    & 1                                & 1                                & 1                                & 1                                   & 1                                   & 1                                \\
\textbf{4-Cycle}         & 2                                    & 1                                & 1                                & 1                                & 1                                   & 1                                   & 1                                \\
\textbf{Custom Graphlet} & 2                                    & 1                                & 1                                & 1                                & 2                                   & 2                                   & 2                                \\ \midrule
\textbf{Enzymes}         & 1                                    & 2                                & 2                                & 1                                & 2                                   & 2                                   & 2                             \\
\textbf{Mutag}           & 1                                    & 1                                & 1                                & 1                                & 1                                   & 1                                   & 2                                \\
\textbf{Proteins}        & 2                                    & 2                                & 2                                & 1                                & 2                                   & 1                                   & 1                                \\
\textbf{PTC}             & 1                                    & 1                                & 2                                & 1                                & 1                                   & 2                                   & 2                                \\ \bottomrule
\end{tabular}
\label{tab:GraphLevelAlphas}
\end{table}

\emph{Reproducibility}-- We provide an additional repository with our changes to the original benchmark, including our modelling scripts, metadata, and experimental results\footnote{\url{https://github.com/nur-ag/gnn-matlang}}.

\textbf{Vertex and Edge-level Experiments}

In this section we break down the best performing hyper-parameters on the Edge (link prediction) and Vertex-level (node classification) experiments.

\emph{Link Prediction}--
The best performing hyperparameter configuration on the Facebook graph including $\alpha = 2$, learning $t = 256$ component vectors with $e = 10$ walks per node, each of length $s = 150$ and $p = 8$ negative samples per positive for the self-supervised negative sampling. Respectively on the arXiv citation graph, we find the best configuration at $\alpha = 2$, $t = 256$, $e = 2$, $s = 100$ and $p = 9$. 

\emph{Node Classification}--
We analyze both encoding distances $\alpha \in \{1, 2\}$. Other $\textsc{Igel}$ hyper-parameters are fixed after a small greedy search based on the best configurations in the link prediction experiments. For the MLP model, we perform greedy architecture search, including number of hidden units, activation functions and depth. Our results show scores averaged over five different seeded runs with the same configuration obtained from hyperparameter search. 

The best performing hyperparameter configuration on the node classification is found with $\alpha = 2$ on $t = 256$ length embedding vectors, concatenated with node features as the input layer for 1000 epochs in a 3-layer MLP using ELU activations with a learning rate of 0.005. Additionally, we apply 100 epoch patience for early stopping, monitoring the F1-score on the validation set.

\emph{Reproducibility}--
We provide a replication folder in the code repository for the exact configurations used to run the experiments\footnote{\url{https://github.com/nur-ag/IGEL}}.

\section{Extended Results on Isomorphism Detection and Graphlet Counting}
\label{app:IsoGraphletResults}

In this section we summarize additional results on isomorphism detection and graphlet counting.

\subsection{Isomorphism Detection}

We provide a detailed breakdown of isomorphism detection performance after introducing $\textsc{Igel}$ in~\autoref{tab:GNN-Similarity}, complimenting our summary on~\autoref{sec:Experiments}.

\begin{wraptable}{r}{0.44\linewidth}
\vspace{-16pt}
\caption{Graph isomorphism detection results. The $\textsc{Igel}$ column denotes whether $\textsc{Igel}$ is used or not in the configuration. For Graph8c, we describe graph pairs erroneously detected as isomorphic. For EXP classify, we show the accuracy of distinguishing non-isomorphic graphs in a binary classification task.}
\tiny
\centering
\begin{tabular}{@{}lcrr@{}}
\toprule
\textbf{Model}                     & \multicolumn{1}{l}{\textbf{+ $\textsc{Igel}$}} & \multicolumn{1}{c}{\textbf{Graph8c}} & \multicolumn{1}{c}{\textbf{EXP Classify}} \\
& &  \multicolumn{1}{c}{\textbf{(\#Errors)}} &  \multicolumn{1}{c}{\textbf{(Accuracy)}} 
\\ \midrule
                                   & No                                  & 6.242M                               & 50\%                                      \\
\multirow{-2}{*}{\textbf{Linear}}  & \textbf{Yes}                        & {\color[HTML]{036400} \textbf{1571}} & {\color[HTML]{036400} \textbf{97.25\%}}   \\ \midrule
                                   & No                                  & 293K                                 & 50\%                                      \\
\multirow{-2}{*}{\textbf{MLP}}     & \textbf{Yes}                        & {\color[HTML]{036400} \textbf{1487}} & {\color[HTML]{036400} \textbf{100\%}}     \\ \midrule
                                   & No                                  & 4196                                 & 50\%                                      \\
\multirow{-2}{*}{\textbf{GCN}}     & \textbf{Yes}                        & {\color[HTML]{036400} \textbf{5}}    & {\color[HTML]{036400} \textbf{100\%}}     \\ \midrule
                                   & No                                  & 1827                                 & 50\%                                      \\
\multirow{-2}{*}{\textbf{GAT}}     & \textbf{Yes}                        & {\color[HTML]{036400} \textbf{5}}    & {\color[HTML]{036400} \textbf{100\%}}     \\ \midrule
                                   & No                                  & 571                                  & 50\%                                      \\
\multirow{-2}{*}{\textbf{GIN}}     & \textbf{Yes}                        & {\color[HTML]{036400} \textbf{5}}    & {\color[HTML]{036400} \textbf{100\%}}     \\ \midrule
                                   & No                                  & 44                                   & 50\%                                      \\
\multirow{-2}{*}{\textbf{Chebnet}} & \textbf{Yes}                        & {\color[HTML]{036400} \textbf{1}}    & {\color[HTML]{036400} \textbf{100\%}}     \\ \midrule
                                   & No                                  & 0                                    & 100\%                                     \\
\multirow{-2}{*}{\textbf{GNNML3}}  & \textbf{Yes}                        & 0                                    & 100\%                                     \\ \bottomrule
\end{tabular}
\label{tab:GNN-Similarity}
\end{wraptable}

\textbf{--- Graph8c.} On the Graph8c dataset, introducing $\textsc{Igel}$ significantly reduces the amount of graph pairs erroneously identified as isomorphic for all MP-GNN models, as shown in~\autoref{tab:GNN-Similarity}. Furthermore, $\textsc{Igel}$ allows a linear baseline employing a sum readout function over input feature vectors, then projecting onto a 10-component space, to identify all but 1571 non-isomorphic pairs compared to the erroneous pairs GCNs (4196 errors) or GATs (1827 errors) can identify without $\textsc{Igel}$. Additionally, we find that all Graph8c graphs can be distinguished if the $\textsc{Igel}$ encodings for $\alpha = 1$ and $\alpha = 2$ are concatenated. We do not explore the expressivity of combinations of $\alpha$ in this work, but hypothesize that concatenated encodings of $\alpha$ may be more expressive.

\textbf{--- Empirical Results on Strongly Regular Graphs.} We also evaluate $\textsc{Igel}$ on SR25\footnote{$\texttt{SRG}(25,12,5,6)$ graphs from: \url{http://users.cecs.anu.edu.au/~bdm/data/graphs.html}}, which contains 15 Strongly Regular graphs with 25 vertices, known to be indistinguishable by 3-WL. With SR25, we validate~\autoref{theo:SRGIgelDistinguishability}. \citep{balcilar2021breaking} showed that no models in our benchmark distinguish any of the 105 non-isomorphic graph pairs in SR25. As expected from~\autoref{theo:SRGIgelDistinguishability}, $\textsc{Igel}$ does not improve distinguishability. 

\subsection{Graphlet Counting}

We evaluate $\textsc{Igel}$ on a (regression) graphlet\footnote{3-stars, triangles, tailed triangles and 4-cycles, plus a custom 1-WL graphlet proposed in~\citep{balcilar2021breaking}} counting task. We minimize Mean Squared Error (MSE) on normalized graphlet counts\footnote{Counts are stddev-normalized so that MSE values are comparable across graphlet types, following~\citep{balcilar2021breaking}.}. \autoref{tab:GNN-Counting} shows the results of introducing $\textsc{Igel}$ in 5 graphlet counting tasks on the RandomGraph data set~\citep{chen2022CanGNNsCountSubstructures}. Stat sig. differences ($p<0.0001$) shown in \textcolor[HTML]{036400}{\textbf{bold green}}, with best (lowest MSE) per-graphlet results \underline{underlined}.

\begin{wraptable}{r}{0.6\linewidth}
\vspace{-.5cm}
\caption{Graphlet counting results. Cells contain mean test set MSE error (lower is better), stat. sig \textbf{\color[HTML]{036400}highlighted}.}
\tiny
\centering
\begin{tabular}{@{}ccrrrrr@{}}
\toprule
\textbf{Model}                     & \textbf{+ $\textsc{Igel}$}                     & \multicolumn{1}{c}{\textbf{Star}}        & \multicolumn{1}{c}{\textbf{Triangle}}    & \multicolumn{1}{c}{\textbf{Tailed Tri.}} & \multicolumn{1}{c}{\textbf{4-Cycle}}     & \multicolumn{1}{c}{\textbf{Custom}}      \\ \midrule
                                   & No                                  & 1.60E-01                                 & 3.41E-01                                 & 2.82E-01                                 & 2.03E-01                                 & 5.11E-01                                 \\
\multirow{-2}{*}{\textbf{Linear}}  & {\color[HTML]{036400} \textbf{Yes}} & {\color[HTML]{036400} \textbf{4.23E-03}} & {\color[HTML]{036400} \textbf{4.38E-03}} & {\color[HTML]{036400} \textbf{1.85E-02}} & {                              1.36E-01} & {\color[HTML]{036400} \textbf{5.25E-02}} \\ \midrule
                                   & No                                  & \underline{2.66E-06}                                 & 2.56E-01                                 & 1.60E-01                                 & 1.18E-01                                 & 4.54E-01                                 \\
\multirow{-2}{*}{\textbf{MLP}}     & {\color[HTML]{036400} \textbf{Yes}} & 8.31E-05                                 & {\color[HTML]{036400} \textbf{\underline{5.69E-05}}} & {\color[HTML]{036400} \textbf{\underline{5.57E-05}}} & {\color[HTML]{036400} \textbf{7.64E-02}} & {\color[HTML]{036400} \textbf{\underline{2.34E-04}}} \\ \midrule
                                   & No                                  & 4.72E-04                                 & 2.42E-01                                 & 1.35E-01                                 & 1.11E-01                                 & 1.54E-03                                 \\
\multirow{-2}{*}{\textbf{GCN}}     & {\color[HTML]{036400} \textbf{Yes}} & 8.26E-04                                 & {\color[HTML]{036400} \textbf{1.25E-03}} & {\color[HTML]{036400} \textbf{4.15E-03}} & {\color[HTML]{036400} \textbf{7.32E-02}} & {\color[HTML]{036400} \textbf{1.17E-03}} \\ \midrule
                                   & No                                  & 4.15E-04                                 & 2.35E-01                                 & 1.28E-01                                 & 1.11E-01                                 & 2.85E-03                                 \\
\multirow{-2}{*}{\textbf{GAT}}     & {\color[HTML]{036400} \textbf{Yes}} & 4.52E-04                                 & {\color[HTML]{036400} \textbf{6.22E-04}} & {\color[HTML]{036400} \textbf{7.77E-04}} & {\color[HTML]{036400} \textbf{7.33E-02}} & {\color[HTML]{036400} \textbf{6.66E-04}} \\ \midrule
                                   & No                                  & 3.17E-04                                 & 2.26E-01                                 & 1.22E-01                                 & 1.11E-01                                 & 2.69E-03                                 \\
\multirow{-2}{*}{\textbf{GIN}}     & {\color[HTML]{036400} \textbf{Yes}} & 6.09E-04                                 & {\color[HTML]{036400} \textbf{1.03E-03}} & {\color[HTML]{036400} \textbf{2.72E-03}} & {\color[HTML]{036400} \textbf{6.98E-02}} & {\color[HTML]{036400} \textbf{2.18E-03}} \\ \midrule
                                   & No                                  & 5.79E-04                                 & 1.71E-01                                 & 1.12E-01                                 & 8.95E-02                                 & 2.06E-03                                 \\
\multirow{-2}{*}{\textbf{Chebnet}} & {\color[HTML]{036400} \textbf{Yes}} & 3.81E-03                                 & {\color[HTML]{036400} \textbf{7.88E-04}} & {\color[HTML]{036400} \textbf{2.10E-03}} & {                              7.90E-02} & {\color[HTML]{036400} \textbf{2.05E-03}} \\ \midrule
                                   & No                                  & 8.90E-05                                 & 2.36E-04                                 & 2.91E-04                                 & \underline{6.82E-04}                                 & 9.86E-04                                 \\
\multirow{-2}{*}{\textbf{GNNML3}}  & Yes                                 & 9.29E-04                                 & 2.19E-04                                 & 4.23E-04                                 & 6.98E-04                                 & 4.17E-04                                 \\ \bottomrule
\end{tabular}
\vspace{-16pt}
\label{tab:GNN-Counting}
\end{wraptable}

Introducing $\textsc{Igel}$ improves counting performance on triangles, tailed triangles and the custom 1-WL graphlets proposed by~\citep{balcilar2021breaking}. Star graphlets can be identified by all baselines, and $\textsc{Igel}$ only produces statistically significant improvements for the Linear baseline.

Notably, the Linear baseline plus $\textsc{Igel}$ outperforms MP-GNNs without $\textsc{Igel}$ for star, triangle, tailed triangle and custom 1-WL graphlets. By introducing $\textsc{Igel}$ on the MLP baseline, it outperforms all other models including GNNML3 on the triangle, tailed-triangle and custom 1-WL graphlets. 

Since Linear and MLP baselines do not use message passing, we believe raw $\textsc{Igel}$ encodings may be sufficient to identify certain graph structures even with simple linear models. For all graphlets except 4-cycles, introducing $\textsc{Igel}$ yields performance similar to GNNML3 at lower pre-processing and model training/inference costs, as $\textsc{Igel}$ obviates the need for costly eigen-decomposition and can be used in simple models only performing graph-level readouts without message passing.

\begin{sidewaystable}[!ht]
\centering
\caption{Overview of the graphs used in the experiments. We show the average number of vertices (Avg. $n$), edges (Avg. $m$), number of graphs, target task, output shape, and splits (when applicable).}
\begin{tabular}{@{}crrrccc@{}}
\toprule
\multicolumn{1}{l}{}                                              & \multicolumn{1}{c}{\textbf{Avg. $n$}} & \multicolumn{1}{c}{\textbf{Avg. $m$}} & \multicolumn{1}{c}{\textbf{\begin{tabular}[c]{@{}c@{}}Num. \\ Graphs\end{tabular}}} & \textbf{Task}                                                                                 & \textbf{Output Shape}                                                               & \textbf{\begin{tabular}[c]{@{}c@{}}Splits\\ (Train / Valid / Test)\end{tabular}}        \\ \midrule
\textbf{Enzymes}                                                  & 32.63                                 & 62.14                                 & 600                                                                                 & \begin{tabular}[c]{@{}c@{}}Multi-class\\ Graph Class.\end{tabular}                            & \begin{tabular}[c]{@{}c@{}}6 (multi-class\\ probabilities)\end{tabular}             & \begin{tabular}[c]{@{}c@{}}9-fold / 1 fold\\ (Graphs, Train / Eval)\end{tabular}        \\
\textbf{Mutag}                                                    & 17.93                                 & 39.58                                 & 188                                                                                 & \begin{tabular}[c]{@{}c@{}}Binary\\ Graph Class.\end{tabular}                                 & \begin{tabular}[c]{@{}c@{}}2 (binary class\\ probabilities)\end{tabular}            & \begin{tabular}[c]{@{}c@{}}9-fold / 1 fold\\ (Graphs, Train / Eval)\end{tabular}        \\
\textbf{Proteins}                                                 & 39.06                                 & 72.82                                 & 1113                                                                                & \begin{tabular}[c]{@{}c@{}}Binary\\ Graph Class.\end{tabular}                                 & \begin{tabular}[c]{@{}c@{}}2 (binary class\\ probabilities)\end{tabular}            & \begin{tabular}[c]{@{}c@{}}9-fold / 1 fold\\ (Graphs, Train / Eval)\end{tabular}        \\
\textbf{PTC}                                                      & 25.55                                 & 51.92                                 & 344                                                                                 & \begin{tabular}[c]{@{}c@{}}Binary\\ Graph Class.\end{tabular}                                 & \begin{tabular}[c]{@{}c@{}}2 (binary class\\ probabilities)\end{tabular}            & \begin{tabular}[c]{@{}c@{}}9-fold / 1 fold\\ (Graphs, Train / Eval)\end{tabular}        \\ \midrule
\textbf{Graph8c}                                                  & 8.0                                   & 28.82                                 & 11117                                                                               & \begin{tabular}[c]{@{}c@{}}Non-isomorphism\\ Detection\end{tabular}                           & N/A                                                                                 & N/A                                                                                     \\
\textbf{EXP Classify}                                             & 44.44                                 & 111.21                                & 600                                                                                 & \begin{tabular}[c]{@{}c@{}}Binary Class.\\ (pairwise graph\\ distinguishability)\end{tabular} & \begin{tabular}[c]{@{}c@{}}1 (non-isomorphic\\ graph pair probability)\end{tabular} & \begin{tabular}[c]{@{}c@{}}Graph pairs\\ 400 / 100 / 100\end{tabular}                   \\
\textbf{SR25}                                                     & 25                                    & 300                                   & 15                                                                                  & \begin{tabular}[c]{@{}c@{}}Non-isomorphism\\ Detection\end{tabular}                           & N/A                                                                                 & N/A                                                                                     \\ \midrule
\textbf{RandomGraph}                                              & 18.8                                  & 62.67                                 & 5000                                                                                & \begin{tabular}[c]{@{}c@{}}Regression\\ (Graphlet Counting)\end{tabular}                      & 1 (graphlet counts)                                                                 & \begin{tabular}[c]{@{}c@{}}Graphs\\ 1500 / 1000 / 2500\end{tabular}                     \\ \midrule
\textbf{\begin{tabular}[c]{@{}c@{}}ArXiv\\ ASTRO-PH\end{tabular}} & 18722                                 & 198110                                & 1                                                                                   & \begin{tabular}[c]{@{}c@{}}Binary Class.\\ (Link Prediction)\end{tabular}                     & 1 (edge probability)                                                                & \begin{tabular}[c]{@{}c@{}}Randomly sampled edges\\ 50\% train / 50\% test\end{tabular} \\
\textbf{Facebook}                                                 & 4039                                  & 88234                                 & 1                                                                                   & \begin{tabular}[c]{@{}c@{}}Binary Class.\\ (Link Prediction)\end{tabular}                     & 1 (edge probability)                                                                & \begin{tabular}[c]{@{}c@{}}Randomly sampled edges\\ 50\% train / 50\% test\end{tabular} \\ \midrule
\textbf{PPI}                                                      & 2373                                  & 68342.4                               & 24                                                                                  & \begin{tabular}[c]{@{}c@{}}Multi-label\\ Vertex Class.\end{tabular}                           & \begin{tabular}[c]{@{}c@{}}121 (binary\\ class probabilities)\end{tabular}          & \begin{tabular}[c]{@{}c@{}}Graphs\\ 20 / 2 / 2\end{tabular}                             \\ \bottomrule
\end{tabular}
\label{tab:GraphsOverview}
\end{sidewaystable}

\end{document}